\newcommand{\captionfonts}{\normalsize}
\long\def\@makecaption#1#2{%
  \vskip\abovecaptionskip
  \sbox\@tempboxa{{\captionfonts #1: #2}}%
  \ifdim \wd\@tempboxa >\hsize
    {\captionfonts #1: #2\par}
  \else
    \hbox to\hsize{\hfil\box\@tempboxa\hfil}%
  \fi
  \vskip\belowcaptionskip}
\newtheorem{theorem}{Theorem}
\newtheorem{corollary}{Corollary}
\newtheorem{lemma}{Lemma}
\DeclareDocumentCommand{\comb}{m m}{{}_{#1}C_{#2}}
\newcommand*\rel@kern[1]{\kern#1\dimexpr\macc@kerna}
\newcommand*\widebar[1]{%
  \begingroup
  \def\mathaccent##1##2{%
    \rel@kern{0.8}%
    \overline{\rel@kern{-0.8}\macc@nucleus\rel@kern{0.2}}%
    \rel@kern{-0.2}%
  }%
  \macc@depth\@ne
  \let\math@bgroup\@empty \let\math@egroup\macc@set@skewchar
  \mathsurround\z@ \frozen@everymath{\mathgroup\macc@group\relax}%
  \macc@set@skewchar\relax
  \let\mathaccentV\macc@nested@a
  \macc@nested@a\relax111{#1}%
  \endgroup
}
\begin{document}
\hspace{13.9cm}1

\ \vspace{20mm}\\

{\LARGE Q$\&$A Label Learning}

\ \\
{\bf \large Kota Kawamoto$^{\displaystyle 1}$ and Masato Uchida$^{\displaystyle 1}$}\\
{$^{\displaystyle 1}$Waseda University, Tokyo, Japan.}
%

{\bf Keywords:} Label Annotation, Label Generative Model, Complementary Label Learning

\thispagestyle{empty}
\markboth{}{NC instructions}
\ \vspace{-0mm}\\
%
\begin{center} {\bf Abstract} \end{center}
Assigning labels to instances is crucial for supervised machine learning.
In this paper, we proposed a novel annotation method called Q\&A labeling, which involves a question generator that asks questions about the labels of the instances to be assigned, and an annotator who answers the questions and assigns the corresponding labels to the instances.
We derived a generative model of labels assigned according to two different Q$\&$A labeling procedures that differ in the way questions are asked and answered.
We showed that, in both procedures, the derived model is partially consistent with that assumed in previous studies.
The main distinction of this study from previous studies lies in the fact that the label generative model was not assumed, but rather derived based on the definition of a specific annotation method, Q\&A labeling.
We also derived a loss function to evaluate the classification risk of ordinary supervised machine learning using instances assigned Q$\&$A labels and evaluated the upper bound of the classification error.
The results indicate statistical consistency in learning with Q$\&$A labels.

\section{Introduction}

In standard supervised machine learning, an ordinary label is typically assigned to each instance to represent the single class to which it belongs.
However, depending on the difficulty in correctly identifying the class label for an instance and the annotator's ability, assigning an ordinary label may be challenging.
To address these challenges, Ishida et al.\ introduced an annotation concept in which each instance is assigned a complementary label representing the class to which it does not belong \citep{ishida2017nips}.
Complementary labels can be assigned using various methods.
For example, a single class that is considered the least suitable for the given instance can be assigned as a complementary label.
Another possible method is to assign one of the arbitrary classes that is considered unsuitable for the given instance as a complementary label.
These differences in annotation methods can affect the nature of the assigned complementary labels.
However, Ishida et al.\ assumed a probabilistic model for generating complementary labels without specifying a specific annotation method \citep{ishida2017nips}.
This assumption implies a situation in which ``other than the class to which the instance belongs, all other classes are selected with equal probability as classes to which the instance does not belong''.
However, whether an annotation method that is consistent with this situation can be implemented remains unclear.

Katsura et al.\ proposed a scenario for annotating candidate labels that represent multiple classes to which an instance is likely to belong \citep{katsura2020bridging}.
This scenario is a generalization of the annotation concept for complementary labels.
As with annotating complementary labels, various methods are possible for annotating candidate labels.
However, no specific annotation method for complementary labels has been presented yet.
The generative model for the candidate label introduced by Katsura et al.\ \citep{katsura2020bridging} is an extension of that for the complementary label assumed by Ishida et al.\ \citep{ishida2017nips}; thus, its feasibility remains unclear.
The theoretical properties of learning with complementary and candidate labels were discussed in \citep{ishida2017nips,katsura2020bridging} based on the implicit assumption of a generative model. However, the validity of this assumption remains unclear.

In this paper, we propose an annotation method that enables annotation even in cases where assigning an ordinary label is difficult, and derive a generative model realized by this method.
We also show that this generative model coincides with the models assumed in \citep{ishida2017nips,katsura2020bridging} under certain conditions.
This provides a theoretical basis for the validity of assumptions regarding a generative model that has not been discussed in \citep{ishida2017nips,katsura2020bridging}.
Furthermore, we derive an upper bound for classification errors in learning using labeled training data annotated with our proposed annotation method, and demonstrate that this learning has statistical consistency.

The proposed annotation method in this paper includes a question generator that asks questions about the labels to be assigned, and an annotator who answers those questions and performs annotations.
The question generator presents the annotator with a set of question classes, which is a subset of the total class set (i.e., the set of all possible classes), and asks a question based on this set.
The annotator answers the question by referring to the question class set.
Hereafter, the proposed annotation method is called Q$\&$A labeling, and the label assigned by Q$\&$A labeling is called the Q$\&$A label.
There are two types of methods for Q$\&$A labeling: which-one and is-in.
In the which-one-type, if the question class set presented by the question generator contains a class that the annotator considers the most suitable, the annotator selects that class as the answer, and it is assigned as the label.
If the question class set does not contain the classes that the annotator considers the most suitable, all classes in the complement of the question class set are assigned as labels.
In the is-in-type, the annotator answers whether the question class set presented by the question generator contains the class that the annotator considers the most suitable.
If the question class set contains classes that the annotator considers the most suitable, all the classes in the question class set are assigned as labels.
Conversely, if the question class set does not contain the classes that the annotator considers the most suitable, all the classes in the complement of the question class set are assigned as labels, as in the which-one-type.


The structure of this paper is as follows.
In Section \ref{sec:related_work}, we provide an overview of related research and explain the positioning of our study.
In Section \ref{sec:preliminary}, we formulate the multi-class classification problem with ordinary labels as a preparation for our study. 
We also discuss the label generative model used in learning with candidate labels and its interpretation.
In Section \ref{sec:formulation}, we formulate learning with Q\&A labels and derive the label generative model, explaining its interpretation.
In Section \ref{sec:analysis}, we analyze learning with Q\&A labels and derive an upper bound on classification error.
In Section \ref{sec:experiment}, we evaluate the performance of learning with Q\&A labels through experiments and discuss the validity of theoretical analysis.
Section \ref{sec:conclusion} presents the conclusion of this paper.
\section{Related Work}
\label{sec:related_work}

Several studies have attempted to reduce the difficulty of annotating ordinary labels by assigning labels that are not necessarily from a single class.
For example, the concept of partial labels, wherein an instance belongs to a single class but multiple classes including that class are assigned as candidates labels \citep{cour2011learning,feng2020provably,pmlr-v139-wen21a}, and the concept of pairwise similarity, wherein pairs of instances are assigned labels indicating whether they are similar or not \citep{bao2018classification,shimada2021classification}, have been proposed. 
The concept of assigning a single class to which an instance does not belong as a label (complementary label) aims to reduce the difficulty of annotating ordinary labels \citep{ishida2017nips,ishida2019icml}.

These studies assumed a label generative model when discussing the properties of learning algorithms.
For example, the generative model of partial labels used in \citep{cour2011learning} assumes that the generated labels always contain the correct class.
In addition, in \citep{feng2020provably} it was assumed that classes other than the correct class were uniformly included.
In \citep{pmlr-v139-wen21a}, this uniformity assumption was removed; however, it is still assumed that the generated labels always contain the correct class.
In studies on pairwise similarity, it is assumed that the bias in discriminating similar instances, which should be different for each annotator, follows the same probability model \citep{bao2018classification, shimada2021classification}.
In studies on complementary labels, it is assumed that all classes, except the class to which the instance belongs, are selected with equal probability as the class to which the instance does not belong \citep{ishida2017nips,ishida2019icml}.
However, specific annotation methods to fulfill these assumptions have not been discussed, and the validity of the label generative model thus remains unclear.


Some studies have been conducted to improve the accuracy of learning when instances with complementary labels are used as training data.
Feng et al.\ introduced a concept inspired by the complementary label, where multiple classes that an instance does not belong to are assigned as labels (multi-complementary label) \citep{pmlr-v119-feng20a}.
Cao et al.\ proposed a method for learning from instances assigned multi-complementary labels and unlabeled instances \citep{Cao2020MultiComplementaryAU}.
Katsura et al.\ introduced the concept of the candidate label, which is a generalization of the ordinary label and complementary label \citep{katsura2020bridging,katsura2021candidate}.
Ishiguro et al. discussed a case in which the correct class was incorrectly assigned as a complementary label \citep{Ishiguro2022noisycon}.
However, all label generative models that were used in these studies are based on the assumptions made by Ishida et al.
Yu et al.\ removed the unnatural assumption in complementary labels that all other classes, except the class to which the instance belongs, are selected with equal probability as the class to which the instance does not belong \citep{yu2018learning}.
However, it is assumed that the probability of a class, other than the correct class being assigned as a complementary label, is equal, regardless of the instance.

In contrast, the implementation method of the proposed Q\&A labeling in this paper is specifically defined, thus ensuring the feasibility of assigning labels based on Q\&A labeling.
Unlike the aforementioned existing studies, which assumed label generative models, this study conducts a theoretical analysis by deriving a generative model of Q$\&$A label according to a clearly specified method of executing Q$\&$A labeling.
Remarkably, the derived generative model of the Q\&A label demonstrates that the concept of the Q$\&$A label is partially equivalent to that of the candidate label \citep{katsura2020bridging,katsura2021candidate}, which is a generalization of the complementary label \citep{ishida2017nips,ishida2019icml}. 
This indicates that the feasibility of the concept of the candidate label is also supported.

\section{Preliminary}\label{sec:preliminary}
In this section, we first formulate a multi-class classification problem using ordinary labels.
We then overview the generative model assumed in the concept of the candidate label introduced by Katsura et al.\ \citep{katsura2020bridging,katsura2021candidate}.

\subsection{Multiclass Classification with Ordinary Labels}
General supervised machine learning for multi-class classification problems is performed under a concept in which each instance is assigned a single class label (ordinary label) that the annotator considers the most suitable.
In this concept, the probability that a pair of instance $x \in \mathcal{X}$ and its assigned ordinary label $y \in \mathcal{Y}$, $(x, y)$, is generated is denoted by $P(x,y):=\Pr\{X=x,Y=y\}$.
Here, $X$ is a random variable representing the instance, and $Y$ is a random variable representing the ordinary label assigned to the instance.
In addition, $\mathcal{X}$ is the set of possible instances, and $\mathcal{Y} := \{1,.... ,K\}$ is the total class set (i.e., the set of all possible classes).
The probability $P(x,y)$ of a pair $(x,y)$ being generated is calculated as the product of two probabilities: $P(x):=\Pr\{X=x\}$, which is the probability of generating instance $x$, and $P(y|x):=\Pr\{Y=y|X=x\}$, which is the probability of generating ordinary label $y$ given instance $x$ . Therefore, $P(x,y) = P(x)P(y|x)$.

Let $f:\mathcal{X}\rightarrow\mathcal{Y}$ be the hypothesis that maps an instance $x\in\mathcal{X}$ to class $y\in\mathcal{Y}$ to which it belongs.
Let $g_y:\mathcal{X}\rightarrow \mathbb{R}$ be the binary discriminant function that classifies class $y$.
Then, hypothesis $f$ can be defined as $f(x)=\arg\max_{y\in\mathcal{Y}}g_y(x)$.
The learning objective is to find the hypothesis $f$ that minimizes the classification risk $R(f)$, defined as the expected value of the loss function $\mathcal{L}$ with respect to $P(X,Y)$ as follows:
\begin{align}
R(f) = \mathbb{E}_{P(X,Y)}[\mathcal{L}(f(X), Y)],
\label{eq:pred_err}
\end{align}
where $\mathbb{E}_{P(X,Y)}$ denotes an expectation with respect to $P(X,Y)$. 
For a set of hypotheses $\mathcal{F}\subset\{h: \mathcal{X}\rightarrow \mathbb{R}^K\}$, the Bayesian hypothesis that minimizes classification risk is given as $f^*:= \arg\min_{f\in\mathcal{F}} R(f)$.

\subsection{Generative Model of Candidate Labels}
The concept of candidate labels introduced by Katsura et al.\ assumes that the annotator assigns a label to multiple classes to which each instance belongs \citep{katsura2020bridging}.
The number of classes selected as the label to be assigned, $N$ ($1 \leq N \leq K-1$), is determined identically regardless of the instance.
In this concept, let $\widebar{P}_{N}(x,\widebar{y}):=\Pr\{X=x,\widebar{Y}=\widebar{y}\}$ denote the probability of a pair of instances $x \in \mathcal{X}$ and the corresponding candidate label of size $N$, $\widebar{y} \in \mathfrak{B}_N(\mathcal{Y})$, is generated.
Here, $X$ is a random variable representing the instance and $\widebar{Y}$ is a random variable representing the candidate label assigned to the instance.
In addition, $\mathcal{X}$ is the set of possible values for the instance and $\mathfrak{B}_N(\mathcal{Y})$ is the power set of size $N$ of the total class set $\mathcal{Y}$, which is the set of all possible candidate labels.
The probability that a pair $(x, \widebar{y})$ is generated, $\widebar{P}_N(x,\widebar{y})$, can be decomposed into the product of the generation probability of instance $x$, $P(x):=\Pr\{X=x\}$, and the generation probability of the candidate label $\widebar{y}$ conditioned on a given instance $x$, $\widebar{P}_N(\widebar{y}|x):=\Pr\{\widebar{Y}=\widebar{y}|X=x\}$.
Katsura et al.\ assume that $\widebar{P}_N(\widebar{y}|x)$ has the following form:
\begin{align}
    \widebar{P}_N(\widebar{y}|x)
    = \frac{1}{\comb{K-1}{N-1}}\sum_{y\in \widebar{y}}P(y|x)
    \label{eq:CandP}
\end{align}
Generating an ordinary label is equivalent to generating a candidate label when $N=1$ as $\widebar{P}_1(\{y\}|x)=P(y|x)$ holds.
Additionally, the generative model of the complementary label introduced by Ishida et al.\ \citep{ishida2017nips,ishida2019icml} is equivalent to the generative model $\widebar{P}_{K-1}(x,\widebar{y})$ for a candidate label of size $K-1$.

    \label{eq:arbitry_loss}


\subsection{Receiver's Perspective of Candidate Labels}\label{sec:recipients_perspective}
Let us consider the amount of information regarding the correct labels obtained from the data with candidate labels generated according to Eq.\ \eqref{eq:CandP}.
We define a probability $\Pr\{\widehat{Y} = \alpha|X=x\}$, which represents the degree of confidence, where the one given the the data considers that the assumed ground-truth label $\widehat{Y}$ for the instance $x$ is $\alpha\in\mathcal{Y}$. 
Now, we assume that a set of candidate labels of size $N$, $\widebar{Y}$, is provided to the instance $x$ by the annotator.
Furthermore, we assume that the one given the data does not have any information about the ground-truth label, except for the given set of candidate labels $\widebar{Y}$.
That is, we assume that the degree of confidence, where the assumed ground-truth label $\widehat{Y}$ for the instance $x$ is $\alpha\in\mathcal{Y}$, depends on whether $\alpha \in \widebar{Y}$ or $\alpha \not\in \widebar{Y}$. Thus, $\Pr\{\widehat{Y}=\alpha|X=x, \alpha \in \widebar{Y}\}=1/N$ and $\Pr\{\widehat{Y}=\alpha|X=x, \alpha \not\in \widebar{Y}\}=0$ hold.
Subsequently, the following equation holds \citep{katsura2020bridging}.
\begin{align}
\Pr\{\widehat{Y}=\alpha|X=x\} = \beta P(\alpha|x) + (1-\beta)\frac{1}{K},\label{eq:privacy}
\end{align}
where $\beta = (K-N)/\{N(K-1)\}$.

According to Eq.\ \eqref{eq:privacy}, the degree of confidence $\Pr\{\widehat{Y}=\alpha|X=x\}$ (i.e., the information about labels that is available to the one given the data) is represented by a mixture of the distribution $P(\alpha|x)$ and uniform distribution $1/K$, while $P(\alpha|x)$ represents the information about labels originally owned by the annotator.
This equation indicates that receiving data provided with a set of $N$ candidate labels generated from Eq.\ \eqref{eq:CandP} is equivalent to receiving a data provided with an ordinary label to which random noise is added.
The level of random noise $(1-\beta)$ increases as the number of candidate labels $N$ increases.
\section{Formulation of Q\&A Labeling}\label{sec:formulation}
\subsection{Procedure of Q$\&$A Labeling}
Q$\&$A labeling is executed by two entities: a question generator that asks questions about the labels to be assigned and an annotator that answers the questions and makes annotations in response to the questions.
Regarding the process of executing ordinary labeling, it is assumed that for each instance to annotate, the annotator is able to infer a single class to which it belongs.
In the following section, the process of executing the Q$\&$A labeling is described (see Fig. \ref{fig:error}).

\begin{figure*}[t]\centering
    \subfigure[Ordinary Labeling]{
        \includegraphics[keepaspectratio, scale=0.4]{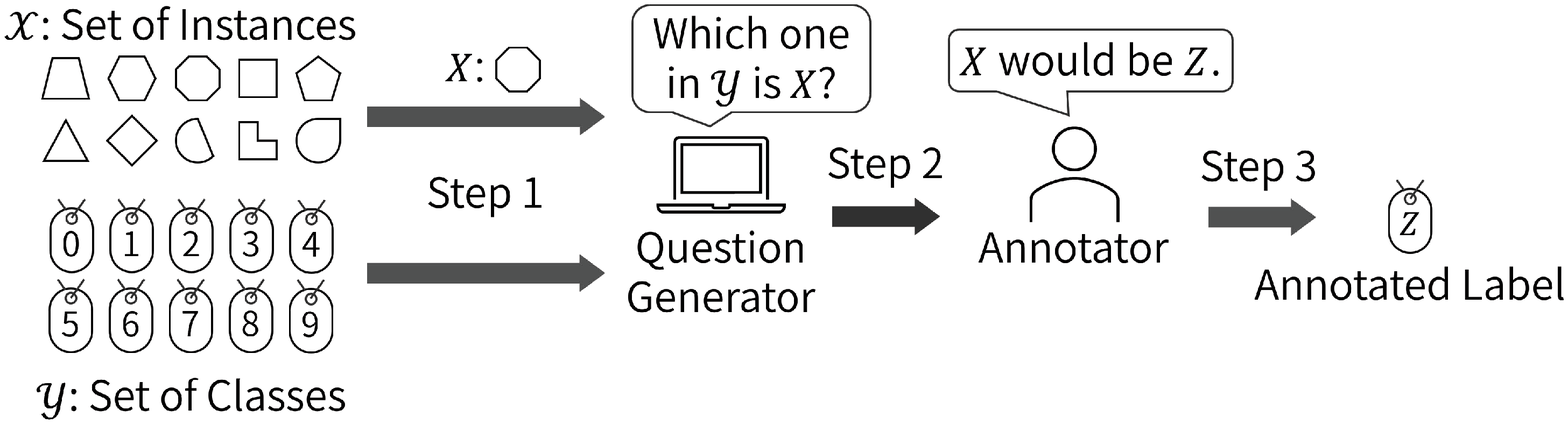}}
    \subfigure[Which-one-type Q$\&$A labeling ($I$ = 3)]{
	\includegraphics[keepaspectratio, scale=0.4]{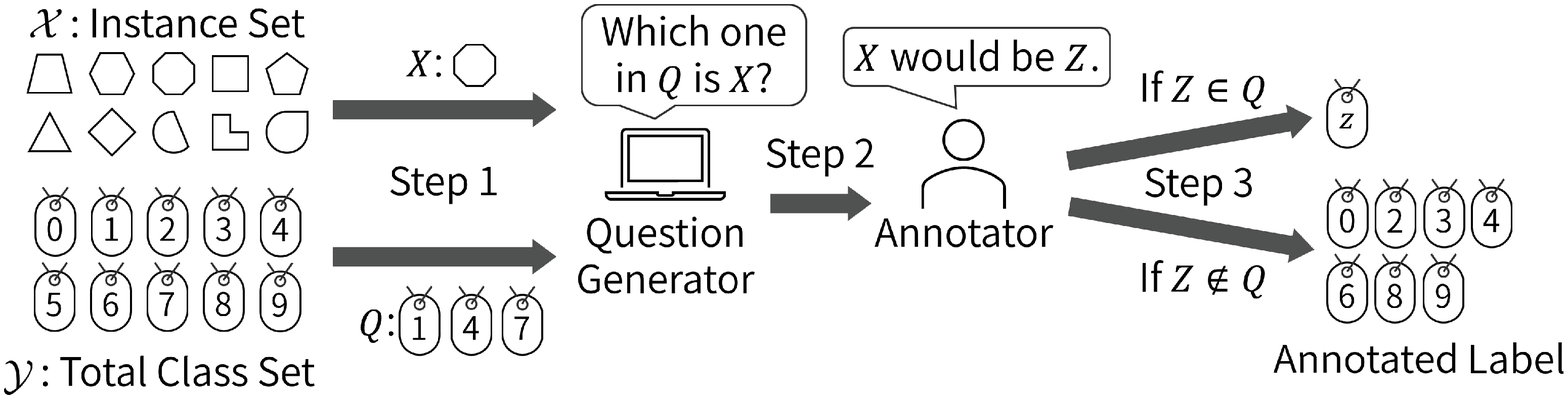}}
    \subfigure[Is-in-type Q$\&$A labeling ($I$ = 3)]{
	\includegraphics[keepaspectratio, scale=0.4]{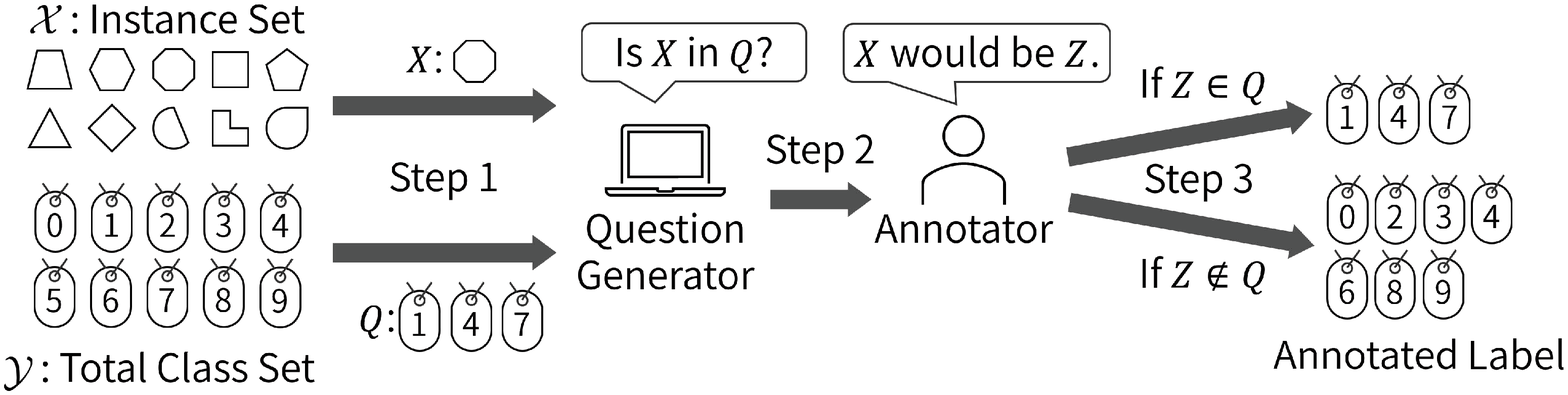}}
    \caption{Process of executing the Q$\&$A labeling.}\label{fig:error}
\end{figure*}

\begin{description}
\item[Step 1: Selecting Instance to Be Annotated and Question Class Set]\mbox{}\\
The question generator selects the instance $X$ to be annotated from instance set $\mathcal{X}$.
The question generator randomly selects $I$ classes from the total class set $\mathcal{Y}$.
The number of question items $I$ is assumed to be predetermined ($1\leq I \leq K-1$).
The set of selected classes is called the question class set, and its random variable is denoted by $Q~(\in {\mathfrak{B}_I(\mathcal{Y})})$.
\item[Step 2: Generating of Question and Presentation to Annotator]\mbox{}\\
The question generator generates a question about the selected instance $X$ and question class set $Q$ and presents the question to the annotator.
In this study, we considered the following two types of question generation methods.
\begin{description}
\item[Which-one-type:] Which one in $Q$ is $X$? 
\item[Is-in-type:] Is $X$ in $Q$?
\end{description}

\item[Step 3: Answering Questions and Labeling]\mbox{}\\
The annotator answers the question in different ways, depending on the question pattern.
The label $\widetilde{Y}$ assigned to the instance $X$ was determined based on the answer from the annotator.
Let $Z~(\in \mathcal{Y})$ be the class that the annotator naturally infers as its correct class when the instance $X$ is presented.
\begin{description}
\item[Which-one-type:]
The annotator answers ``$Z$'' if the class $Z$ is included in the question class set $Q$ ($Z \in Q$) and $\{Z\}~(\in \mathfrak{B}_1(\mathcal{Y}))$ is assigned to the instance $X$ as the label ($\widetilde{Y} = \{Z\}$).
If not included ($Z \notin Q$), the annotator answers ``not included'' and $\mathcal{Y}\backslash Q ~ (\in \mathfrak{B}_{K-I}(\mathcal{Y}))$ is assigned to the instance $X$ ($\tilde{y} = \mathcal{Y}\backslash Q$).
\item[Is-in-type:]
The annotator answers ``yes'' if the class $Z$ is included in the question class set $Q$ ($Z \in Q$) and $Q ~ (\in \mathcal{B}_I(\mathcal{Y}))$ is assigned to the instance $x$ as the label ($\widetilde{Y} = Q$).
If not included ($Z \notin Q$), the annotator answers ``no'' and $\mathcal{Y}\backslash Q ~ (\in \mathfrak{B}_{K-I}(\mathcal{Y}))$ is assigned to the instance $X$ ($\widetilde{Y} = \mathcal{Y}\backslash Q$).
\end{description}
\end{description}

\subsection{Which-one-type Q\&A Labeling}\label{sec:which-one-labeling}
In this section, we model annotations that follow the process of the which-one-type Q$\&$A labeling.
We denote the probability that a pair of instance $x$ and the which-one-type Q$\&$A label $\tilde{y}$ is generated as $\widetilde{P}_I(x,\tilde{y}):=\Pr\{X=x, \widetilde{Y}=\tilde{y}\}$.
Here, $X$ is a random variable representing the instance and $\widetilde{Y}$ is a random variable representing the which-one-type Q$\&$A label assigned to the instance.
We also assume that $\widetilde{P}_I(x,\tilde{y})$ can be decomposed into the product of $P(x):=\Pr\{X=x\}$ and $\widetilde{P}_I(\tilde{y}|x):=\Pr\{\widetilde{Y}=\tilde{y}|X=x\}$.
Then, the following theorem holds for the generative model of the which-one-type Q$\&$A label.
The proofs of the theorems, corollaries, and lemmas presented in this paper, including this theorem, are provided in the Appendix.

\begin{theorem}\label{thm:which-one-gen}
When assigning a label to an instance $x$ using the which-one-type Q$\&$A labeling, the probability that the assigned label is $\tilde{y}$ is given as follows:
\begin{align}
    \widetilde{P}_I(\tilde{y}|x) =
    \begin{cases}
       \displaystyle\frac{I}{K}\sum_{y \in \tilde{y}}P(y|x), & \text{if $\tilde{y} \in \mathfrak{B}_1(\mathcal{Y})$} \\
       \displaystyle\frac{1}{{}_K C_{I}}\sum_{y \in \tilde{y}}P(y|x), & \text{if $\tilde{y} \in \mathfrak{B}_{K-I}(\mathcal{Y})$} \\
    \end{cases}.
    \label{eq:generative_model_which_one}
\end{align}
\end{theorem}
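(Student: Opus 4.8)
The plan is to compute $\widetilde{P}_I(\tilde{y}\mid x)=\Pr\{\widetilde{Y}=\tilde{y}\mid X=x\}$ directly from the definition of the which-one-type procedure, by enumerating, for each admissible label $\tilde{y}$, the pairs $(Q,Z)$ of question class set and annotator-inferred class that produce it. Two modeling facts will drive the argument and I would state them at the outset: the annotator's inferred class carries the ordinary-label distribution, $\Pr\{Z=y\mid X=x\}=P(y\mid x)$; and the question generator draws $Q$ uniformly from $\mathfrak{B}_I(\mathcal{Y})$ independently of $Z$ (given $X=x$). All the factorizations below rest on this conditional independence of $Q$ and $Z$.

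First I would handle the singleton case $\tilde{y}=\{y\}\in\mathfrak{B}_1(\mathcal{Y})$. By the procedure, $\widetilde{Y}=\{y\}$ occurs exactly when $Z=y$ and that class lies in the question set, $y\in Q$. Independence then gives $\widetilde{P}_I(\{y\}\mid x)=\Pr\{Z=y\mid X=x\}\,\Pr\{y\in Q\}$. The first factor is $P(y\mid x)$; the second is the chance that a fixed element belongs to a uniformly random $I$-subset, namely $\binom{K-1}{I-1}/\binom{K}{I}=I/K$. This produces $\tfrac{I}{K}P(y\mid x)$, which equals $\tfrac{I}{K}\sum_{y'\in\tilde{y}}P(y'\mid x)$ since the singleton sum has a single term.

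Next I would treat the complement case $\tilde{y}\in\mathfrak{B}_{K-I}(\mathcal{Y})$. Such a label can only come from the branch $Z\notin Q$, which forces $\mathcal{Y}\setminus Q=\tilde{y}$, i.e.\ $Q=\mathcal{Y}\setminus\tilde{y}$ is the unique $I$-subset consistent with $\tilde{y}$, together with $Z\in\mathcal{Y}\setminus Q=\tilde{y}$. Independence splits the probability as $\Pr\{Q=\mathcal{Y}\setminus\tilde{y}\}\cdot\Pr\{Z\in\tilde{y}\mid X=x\}$. The first factor is $1/\binom{K}{I}$, since $Q$ is uniform over the $\binom{K}{I}$ subsets of size $I$; the second is $\sum_{y\in\tilde{y}}P(y\mid x)$, since the events $\{Z=y\}$ are disjoint. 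This gives the second line of the claimed formula.

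The computation is routine; the point I would watch, and what I regard as the main conceptual content, is the asymmetry between the factors $I/K$ and $1/\binom{K}{I}$. It arises because a singleton label constrains $Q$ only through the membership event $y\in Q$, which admits many question sets, whereas a size-$(K-I)$ label pins $Q$ down to the single complement $\mathcal{Y}\setminus\tilde{y}$. I would also flag the boundary value $I=K-1$, where $K-I=1$ makes $\mathfrak{B}_1(\mathcal{Y})$ and $\mathfrak{B}_{K-I}(\mathcal{Y})$ coincide and both branches yield singletons; there the two contributions should be summed, and the check $I/K+1/\binom{K}{I}=(K-1)/K+1/K=1$ confirms that the label is then deterministically $\{Z\}$ with probability $P(y\mid x)$, consistently with both lines of the formula.
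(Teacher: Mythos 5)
Your proof is correct and follows essentially the same route as the paper's: the paper also reduces the computation to the conditional independence of $Q$ and $Z$ given $X=x$, the uniformity of $Q$ over $\mathfrak{B}_I(\mathcal{Y})$ (giving $\Pr\{y\in Q\}=I/K$ and $\Pr\{Q=\mathcal{Y}\setminus\tilde{y}\}=1/{}_K C_I$), and the identification $\Pr\{Z=y|X=x\}=P(y|x)$, merely organizing it as a lemma conditioned on $Z=z$ followed by marginalization over $Z$. Your closing remark on the boundary case $I=K-1$, where $\mathfrak{B}_1(\mathcal{Y})$ and $\mathfrak{B}_{K-I}(\mathcal{Y})$ coincide and the two branches must be summed to recover $P(y|x)$, is a worthwhile precision that the paper's proof does not address.
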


Next, we discuss the information obtained from the instance assigned with the Q$\&$A label.
Based on the perspective provided in Sec.\ \ref{sec:recipients_perspective}, a receiver who receives an instance with the which-one-type Q$\&$A label containing multiple classes will assume the classes contained in the which-one-type Q$\&$A label as ground truth as uniform probability.
Let $\widehat{Y}$ be a random variable that represents the class assumed by the receiver of the data as the ground truth. Then, the following corollary holds for the information regarding the ground truth obtained by the receiver.

\begin{corollary}\label{cor:which-one-recv}
For an instance $x$ annotated by the which-one-type Q$\&$A labeling, the probability that the receiver of the instance assumes the ground truth label to be $\alpha~(\in \mathcal{Y})$ is given by
\begin{align}
    \Pr\{\widehat{Y}=\alpha|X=x\} = \beta P(\alpha|x) + (1-\beta)\frac{1}{K},
    \label{eq:inf_pt1}
\end{align}
where $\beta = I/(K-1)$
\end{corollary}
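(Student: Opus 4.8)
The plan is to obtain $\Pr\{\widehat{Y}=\alpha|X=x\}$ by marginalizing the receiver's confidence over the which-one-type label $\tilde{y}$. Following the receiver's perspective of Section~\ref{sec:recipients_perspective}, the receiver spreads uniform confidence over the classes inside the observed label, so $\Pr\{\widehat{Y}=\alpha|X=x,\widetilde{Y}=\tilde{y}\}=\mathbbm{1}[\alpha\in\tilde{y}]/|\tilde{y}|$. Since a which-one-type label is either a singleton or a set of size $K-I$, substituting the generative model of Theorem~\ref{thm:which-one-gen} gives
\begin{align}
\Pr\{\widehat{Y}=\alpha|X=x\}
&= \sum_{\tilde{y}\in\mathfrak{B}_1(\mathcal{Y})}\mathbbm{1}[\alpha\in\tilde{y}]\,\widetilde{P}_I(\tilde{y}|x)
+ \sum_{\tilde{y}\in\mathfrak{B}_{K-I}(\mathcal{Y})}\frac{\mathbbm{1}[\alpha\in\tilde{y}]}{K-I}\,\widetilde{P}_I(\tilde{y}|x).
\end{align}
The first sum collapses at once: the only singleton containing $\alpha$ is $\{\alpha\}$, for which Theorem~\ref{thm:which-one-gen} gives $\widetilde{P}_I(\{\alpha\}|x)=(I/K)P(\alpha|x)$, so this term equals $(I/K)P(\alpha|x)$.

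For the second sum I would insert $\widetilde{P}_I(\tilde{y}|x)=\frac{1}{\comb{K}{I}}\sum_{y\in\tilde{y}}P(y|x)$ and swap the order of summation, turning the weight on each $P(y|x)$ into the number of $(K-I)$-subsets that contain both $\alpha$ and $y$. This splits into two cases: when $y=\alpha$ there are $\comb{K-1}{K-I-1}$ such subsets, and when $y\neq\alpha$ there are $\comb{K-2}{K-I-2}$. Using $\sum_{y\neq\alpha}P(y|x)=1-P(\alpha|x)$, the whole second sum becomes a linear expression in $P(\alpha|x)$ whose coefficients are these two binomial counts divided by $(K-I)\comb{K}{I}$.

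What remains is routine simplification of the binomial ratios: one checks $\comb{K-1}{K-I-1}/\comb{K}{I}=(K-I)/K$ and $\comb{K-2}{K-I-2}/\comb{K}{I}=(K-I)(K-I-1)/\{K(K-1)\}$, so after the division by $(K-I)$ the second sum equals $\frac{1}{K}P(\alpha|x)+\frac{K-I-1}{K(K-1)}(1-P(\alpha|x))$. Adding the singleton term and collecting the coefficient of $P(\alpha|x)$, the numerator $(I+1)(K-1)-(K-I-1)$ telescopes to $IK$, giving coefficient $\beta=I/(K-1)$; the constant term is then exactly $(1-\beta)/K$, which reproduces Eq.~\eqref{eq:inf_pt1}. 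The hard part will be the double-counting step for the size-$(K-I)$ labels---separating the $y=\alpha$ and $y\neq\alpha$ cases and reading off the correct binomial coefficients---since everything after that is algebraic cancellation mirroring the candidate-label derivation of Eq.~\eqref{eq:privacy}.
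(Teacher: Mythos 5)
Your proposal is correct and follows essentially the same route as the paper's proof: both decompose the receiver's confidence as $\mathbbm{1}[\alpha\in\tilde{y}]/|\tilde{y}|$ over the two label sizes ($\mathfrak{B}_1(\mathcal{Y})$ and $\mathfrak{B}_{K-I}(\mathcal{Y})$), substitute the generative model of Theorem \ref{thm:which-one-gen}, and resolve the size-$(K-I)$ sum by the same double-counting of subsets containing $\alpha$ and $y$ (your coefficients $\comb{K-1}{K-I-1}$ and $\comb{K-2}{K-I-2}$ are exactly those in the paper's identity Eq.\ \eqref{eq:which-one-recv-4}, up to Pascal's rule). The remaining algebra, including the collapse of the $P(\alpha|x)$ coefficient to $I/(K-1)$, matches the paper's computation.
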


Comparing Eqs.\ \eqref{eq:privacy} and \eqref{eq:inf_pt1}, information regarding the label obtained by the receiver can be given in a similar form when receiving an instance with a candidate label and when receiving an instance with a Q$\&$A label.
When setting $N=K/(I+1)$ in the random noise $\beta$ of Eq.\ \eqref{eq:privacy}, it is consistent with Eq.\ \eqref{eq:inf_pt1}.
This implies that information regarding the label obtained by the receiver from the instance annotated by the which-one-type Q$\&$A labeling for the number of question items $I$ is equivalent to the information regarding the label obtained by the receiver from the instance assigned by candidate labeling with $N=K/(I+1)$.
In particular, the Q$\&$A label assigned by the which-one-type Q$\&$A labeling with the number of question items $I=K-1$ is equivalent to an ordinary label.
In the case of the which-one-type Q$\&$A labeling with the number of question items $I=K-1$, the annotator either answers the single class among the classes included in the question class set of size $K-1$, or answers ``not included'', i.e., the single class in the complement of the question class set.
Therefore, a single class is assigned as the label for all samples, and thus this method is equivalent to an ordinary labeling.

\subsection{Is-in-type Q$\&$A Labeling}
Here, we model annotations that follow the process of is-in-type
Q\&A labeling.
We denote the probability that a pair of instance $x$ and the is-in-type Q$\&$A label $\tilde{y}$ is generated as $\widetilde{P}_I(x,\tilde{y}):=\Pr\{X=x, \widetilde{Y}=\tilde{y}\}$.
Here, $X$ is a random variable representing the instance and $\widetilde{Y}$ is a random variable representing the is-in-type Q$\&$A label assigned to the instance.
We also assume that $\widetilde{P}_I(x,\tilde{y})$ can be decomposed into the product of $P(x):=\Pr\{X=x\}$ and $\widetilde{P}_I(\tilde{y}|x):=\Pr\{\widetilde{Y}=\tilde{y}|X=x\}$.
Then, the following theorem holds for the generative model of the is-in-type Q$\&$A label.

\begin{theorem}\label{thm:is-in-gen}
When assigning a label to an instance $x$ using the is-in-type Q$\&$A labeling, the probability that the assigned label is $\tilde{y}$ is given as follows.
\begin{align}
    \widetilde{P}_I(\tilde{y}|x) = 
       \frac{1}{\comb{K}{I}}\sum_{y \in \tilde{y}}P(y|x).
    \label{eq:generative_model_is_in}
\end{align}
\end{theorem}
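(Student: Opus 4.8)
The plan is to compute $\widetilde{P}_I(\tilde{y}|x)=\Pr\{\widetilde{Y}=\tilde{y}|X=x\}$ directly from the is-in procedure by making explicit the two sources of randomness that determine $\widetilde{Y}$: the question class set $Q$ selected by the question generator in Step~1, and the class $Z$ the annotator infers from $X$ in Step~3. First I would fix the modeling assumptions that are implicit in those steps. Since the question generator chooses $I$ classes uniformly at random from $\mathcal{Y}$ without reference to the instance, $Q$ is uniform over $\mathfrak{B}_I(\mathcal{Y})$, so $\Pr\{Q=q\}=1/\binom{K}{I}$ for every $q\in\mathfrak{B}_I(\mathcal{Y})$, and $Q$ is independent of $Z$ given $X$. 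The naturally inferred class $Z$ is identified with the ordinary-label mechanism, i.e.\ $\Pr\{Z=z|X=x\}=P(z|x)$; this is the single substantive assumption and plays here the same role it plays in ordinary labeling and in the which-one-type analysis.

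Next I would translate the event $\{\widetilde{Y}=\tilde{y}\}$ into a statement about the pair $(Q,Z)$ using the Step~3 rule, distinguishing two cases by the size of $\tilde{y}$. By that rule, $\widetilde{Y}=Q$ on a ``yes'' answer (when $Z\in Q$) and $\widetilde{Y}=\mathcal{Y}\setminus Q$ on a ``no'' answer (when $Z\notin Q$). Hence, if $|\tilde{y}|=I$ the outcome $\widetilde{Y}=\tilde{y}$ forces a ``yes'' answer with $Q=\tilde{y}$ and $Z\in\tilde{y}$; if $|\tilde{y}|=K-I$ the outcome forces a ``no'' answer with $Q=\mathcal{Y}\setminus\tilde{y}$, and the condition $Z\notin Q$ then becomes exactly $Z\in\tilde{y}$.

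I would then evaluate each case by conditioning on $Q$, using the independence of $Q$ and $Z$ and marginalizing over $Z$:
\begin{align*}
\Pr\{\widetilde{Y}=\tilde{y}|X=x\}
&= \Pr\{Q=q^{*}\}\,\Pr\{Z\in\tilde{y}|X=x\} \\
&= \frac{1}{\binom{K}{I}}\sum_{y\in\tilde{y}}P(y|x),
\end{align*}
where $q^{*}=\tilde{y}$ in the ``yes'' case and $q^{*}=\mathcal{Y}\setminus\tilde{y}$ in the ``no'' case. In both cases the pinned-down set is a single element of $\mathfrak{B}_I(\mathcal{Y})$, so its probability is exactly $1/\binom{K}{I}$, and $\Pr\{Z\in\tilde{y}|X=x\}=\sum_{y\in\tilde{y}}P(y|x)$. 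Since the two cases yield identical expressions, they collapse into the single formula \eqref{eq:generative_model_is_in}.

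The step I expect to require the most care is the bookkeeping in the ``no'' case: correctly converting $\{Z\notin Q\}$ into $\{Z\in\tilde{y}\}$ after imposing $Q=\mathcal{Y}\setminus\tilde{y}$, and noting that for the ``no'' label it is the specific complement set $\mathcal{Y}\setminus\tilde{y}$ (not $\tilde{y}$ itself) whose uniform probability $1/\binom{K}{I}$ enters. I would also verify that the ``yes'' family (labels of size $I$) and the ``no'' family (labels of size $K-I$) do not overlap, so that no outcome is double counted; this is immediate whenever $I\neq K-I$, and I would flag the boundary case $I=K/2$ explicitly, since there a set of size $I$ can arise from both a ``yes'' and a ``no'' answer and the two contributions would have to be added.
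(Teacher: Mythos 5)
Your proof is correct and follows essentially the same route as the paper's. The paper's Lemma \ref{lem:is-in-y-given-x-and-z} conditions on the annotator's inferred class $Z$ and observes that the label event pins down $Q$ uniquely (yielding the factor $1/\binom{K}{I}$), after which the main proof marginalizes over $Z$ with $\Pr\{Z=z|X=x\}=P(z|x)$; you merely swap the conditioning order, fixing $Q$ first and using its independence from $Z$ given $X$. Term by term, the computations coincide.

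One point in your proposal goes beyond the paper and is worth keeping: the boundary case $I=K/2$. The paper's proof silently treats the ``yes'' labels (size $I$) and ``no'' labels (size $K-I$) as distinguishable events, which fails exactly when $I=K/2$: there a given set $\tilde{y}$ of size $K/2$ arises both from $Q=\tilde{y}$ with a ``yes'' answer and from $Q=\mathcal{Y}\setminus\tilde{y}$ with a ``no'' answer, and, as you note, the two contributions must be added, giving $\frac{2}{\binom{K}{I}}\sum_{y\in\tilde{y}}P(y|x)$ --- twice the stated formula. A quick sanity check confirms this: with the formula of Eq.\ \eqref{eq:generative_model_is_in}, the probabilities over $\mathfrak{B}_{K/2}(\mathcal{Y})$ sum to $1/2$ rather than $1$. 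So the theorem as stated holds only for $I\neq K/2$, or under the convention that the label implicitly records the answer type. This is not a purely academic caveat, since the paper's own experiments use $K=10$ with $I=5$; the downstream results (the loss function and error bounds) happen to remain consistent because their derivations sum the two families separately even when the families coincide, but the generative model statement itself needs the qualification you flagged.
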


Similar to the discussion presented in Sec.\ \ref{sec:which-one-labeling}, the following corollary holds for information about the ground truth that can be assumed by the receiver.

\begin{corollary}\label{cor:is-in-recv}
For an instance $x$ annotated by the is-in-type Q$\&$A labeling, the probability that the receiver of the instance assumes the ground-truth label to be $\alpha~(\in \mathcal{Y})$ is given by
\begin{align}
    \Pr\{\widehat{Y}=\alpha|X=x\} = \beta P(\alpha|x) + (1-\beta) \frac{1}{K},
    \label{eq:inf_pt2}
\end{align}
where $\beta = 1/(K-1)$.
\end{corollary}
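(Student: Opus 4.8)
The plan is to reuse the receiver's perspective of Sec.~\ref{sec:recipients_perspective} together with the is-in-type generative model of Theorem~\ref{thm:is-in-gen}, exactly paralleling the route that produced Eq.~\eqref{eq:privacy} and Corollary~\ref{cor:which-one-recv}. First I would record the conditional confidence: given that the received label is $\tilde{y}$, the receiver, having no information beyond $\tilde{y}$, spreads its belief uniformly over the contained classes, so that $\Pr\{\widehat{Y}=\alpha\mid X=x,\widetilde{Y}=\tilde{y}\}=\mathbbm{1}[\alpha\in\tilde{y}]/|\tilde{y}|$. Marginalizing over $\tilde{y}$ via the law of total probability then gives
\begin{align}
\Pr\{\widehat{Y}=\alpha\mid X=x\}=\sum_{\tilde{y}}\frac{\mathbbm{1}[\alpha\in\tilde{y}]}{|\tilde{y}|}\,\widetilde{P}_I(\tilde{y}\mid x),
\end{align}
where the weights $\widetilde{P}_I(\tilde{y}\mid x)$ are exactly those supplied by Eq.~\eqref{eq:generative_model_is_in}.

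Next I would observe that every is-in-type label is either a question set $\tilde{y}\in\mathfrak{B}_I(\mathcal{Y})$ of size $I$ or a complement $\tilde{y}\in\mathfrak{B}_{K-I}(\mathcal{Y})$ of size $K-I$, so the sum breaks into two blocks, weighted by $1/I$ and $1/(K-I)$ respectively. Crucially, Theorem~\ref{thm:is-in-gen} assigns both block types the \emph{same} normalizing factor $1/\comb{K}{I}$ (unlike the which-one case), so after substituting $\widetilde{P}_I(\tilde{y}\mid x)=\tfrac{1}{\comb{K}{I}}\sum_{y\in\tilde{y}}P(y\mid x)$ each block reduces to a double sum $\sum_{\tilde{y}\ni\alpha}\sum_{y\in\tilde{y}}P(y\mid x)$ taken over all sets of a fixed size $s\in\{I,K-I\}$.

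The core step is to evaluate those double sums by exchanging summation order and counting sets. Separating the diagonal term $y=\alpha$ from the off-diagonal $y\neq\alpha$, I would use that among size-$s$ subsets there are $\comb{K-1}{s-1}$ containing $\alpha$ and $\comb{K-2}{s-2}$ containing both $\alpha$ and a fixed $y\neq\alpha$, with the off-diagonal mass collapsing through $\sum_{y\neq\alpha}P(y\mid x)=1-P(\alpha\mid x)$. Pascal's rule $\comb{K-1}{s-1}-\comb{K-2}{s-2}=\comb{K-2}{s-1}$ then rewrites each block's contribution as an affine function of $P(\alpha\mid x)$, namely $\tfrac{1}{s\,\comb{K}{I}}\bigl(\comb{K-2}{s-1}P(\alpha\mid x)+\comb{K-2}{s-2}\bigr)$.

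The main obstacle is the final bookkeeping that merges the $s=I$ and $s=K-I$ blocks into the claimed mixture. Here the complementary symmetry $\comb{K-2}{K-I-1}=\comb{K-2}{I-1}$ makes the $P(\alpha\mid x)$-coefficients of the two blocks align, and absorption identities such as $\comb{K-2}{I-1}/\comb{K}{I}=I(K-I)/\{K(K-1)\}$ let the factor $\tfrac1I+\tfrac1{K-I}=K/\{I(K-I)\}$ cancel, collapsing the coefficient to $\beta=1/(K-1)$ independently of $I$. A parallel reduction of the constant blocks yields $(K-2)/\{K(K-1)\}=(1-\beta)/K$, completing Eq.~\eqref{eq:inf_pt2}. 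I expect the $I$-independence of $\beta$ to be the delicate point, so I would first confirm the constants on the small instance $K=3,\ I=1$ (where the expression must equal $\tfrac12 P(\alpha\mid x)+\tfrac16$) before presenting the general algebra.
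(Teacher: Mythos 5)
Your proposal is correct and follows essentially the same route as the paper's proof: the uniform conditional confidence $\mathbbm{1}[\alpha\in\tilde{y}]/|\tilde{y}|$ marginalized over the two block types (size $I$ with weight $1/I$, size $K-I$ with weight $1/(K-I)$), substitution of the generative model from Theorem \ref{thm:is-in-gen}, and the same subset-counting identity (the paper's Eq.\ \eqref{eq:which-one-recv-4}, which is exactly your Pascal's-rule computation) before merging the two affine blocks. The only difference is presentational: you rederive the counting identity from scratch, whereas the paper reuses it from the proof of Corollary \ref{cor:which-one-recv}.
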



Comparing Eqs.\ \eqref{eq:privacy} and \eqref{eq:inf_pt2}, the information about the label obtained by the receiver can be given in a similar form when receiving an instance with a candidate label and when receiving an instance with a Q$\&$A label.
When setting $N=K/2$ in the random noise $\beta$ of Eq.\ \eqref{eq:privacy}, it is consistent with Eq.\ \eqref{eq:inf_pt2}.
This implies that information regarding the label obtained by the receiver from the instance annotated by the is-in-type Q$\&$A labeling is equivalent to the information regarding the label obtained by the receiver from the instance assigned by candidate labeling with $N=K/2$, regardless of the number of question items $I$.



The results of Corollaries \ref{cor:which-one-recv} and \ref{cor:is-in-recv} show that, from the receiver's perspective, Q$\&$A labeling is partially equivalent to the candidate label.
They also demonstrated that the validity of Eq.\ \eqref{eq:CandP} assumed in \citep{ishida2017nips,katsura2020bridging} is guaranteed.
Note that, for the Q$\&$A label obtained by the which-one-type Q$\&$A labeling, the intensity of random noise $(1-\beta)$ increases as the number of question items $I$ increases. For the Q$\&$A label obtained by the is-in-type Q$\&$A labeling, the intensity of random noise $(1-\beta)$ is constant regardless of the number of question items $I$.
\section{Analysis}\label{sec:analysis}
\subsection{Which-one-type Q\&A Labeling}\label{sec:analysis-which-one}
To conduct learning based on empirical risk minimization (ERM) using instances with the which-one-type Q$\&$A label, a loss function should be specified to evaluate its performance.
The following theorem shows that the classification risk defined by Eq.\ \eqref{eq:pred_err}, as the expectation with respect to the generative model of the ordinary label, $P(X,Y)$, can be also expressed as the expectation with respect to the generative model of the Q$\&$A label, $\widetilde{P}_I(X,\widetilde{Y})$.

\begin{theorem}\label{thm:which-one-loss}
The classification risk defined in Eq.\ \eqref{eq:pred_err} is given by using the generative model of which-one-type Q$\&$A label, $\widetilde{P}_I(X,\widetilde{Y})$, as follows:
\begin{align}
    R(f) = \mathbb{E}_{\widetilde{P}_I(X, \widetilde{Y})}\Bigl[\widetilde{\mathcal{L}}_{I}(f(X), \widetilde{Y})\Bigl],
    \label{eq:pred_err_ord_which-one}
\end{align}
where
\begin{align}
    \widetilde{\mathcal{L}}_{I}(f(x), \tilde{y}) 
    &=\sum_{y \in \tilde{y}}\mathcal{L}(f(X), y) - \frac{(K-I)(K-I-1)}{I(2K-I-1)}\sum_{y \notin \tilde{y}}\mathcal{L}(f(X), y).
    \label{eq:loss-function_which-one}
\end{align}
\end{theorem}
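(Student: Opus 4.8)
The plan is to reduce the claimed risk identity to a single per-instance equation and then verify it by elementary counting over subsets. Since $P(x,y)=P(x)P(y|x)$ and $\widetilde{P}_I(x,\tilde{y})=P(x)\widetilde{P}_I(\tilde{y}|x)$ both factor through the common instance marginal $P(x)$, it suffices to establish, for each fixed $x$,
\begin{align}
\sum_{y\in\mathcal{Y}}P(y|x)\,\mathcal{L}(f(x),y)=\sum_{\tilde{y}}\widetilde{P}_I(\tilde{y}|x)\,\widetilde{\mathcal{L}}_{I}(f(x),\tilde{y}),
\end{align}
after which multiplying by $P(x)$ and summing over $x$ yields Eq.\ \eqref{eq:pred_err_ord_which-one}. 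To lighten notation I would abbreviate $L_y:=\mathcal{L}(f(x),y)$, $p_y:=P(y|x)$, $S_1:=\sum_y p_yL_y$, $S_2:=\sum_y L_y$, and $C:=(K-I)(K-I-1)/[I(2K-I-1)]$, using $\sum_y p_y=1$ throughout. By Theorem \ref{thm:which-one-gen} the only labels with nonzero probability are singletons $\{z\}\in\mathfrak{B}_1(\mathcal{Y})$ and complements $\mathcal{Y}\setminus Q\in\mathfrak{B}_{K-I}(\mathcal{Y})$, so I would split the right-hand sum into these two families and evaluate each separately.

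For the singleton family, substituting $\widetilde{P}_I(\{z\}|x)=\tfrac{I}{K}p_z$ and $\widetilde{\mathcal{L}}_{I}(f(x),\{z\})=L_z-C\sum_{y\ne z}L_y$ and using $\sum_{y\ne z}L_y=S_2-L_z$, the contribution collapses to $\tfrac{I(1+C)}{K}S_1-\tfrac{IC}{K}S_2$. The size-$(K-I)$ family is the laborious part: with $\widetilde{P}_I(\tilde{y}|x)=\tfrac{1}{\comb{K}{I}}\sum_{y\in\tilde{y}}p_y$, I would expand the product against $\widetilde{\mathcal{L}}_{I}$ and count, over all $(K-I)$-subsets $S$, how often each configuration occurs --- a fixed element lies in $\comb{K-1}{K-I-1}$ subsets, a fixed pair in $\comb{K-2}{K-I-2}$ subsets, and a fixed ``in/out'' pair $(a\in S,\,c\notin S)$ in $\comb{K-2}{K-I-1}$ subsets. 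Applying Pascal's identity $\comb{K-1}{K-I-1}-\comb{K-2}{K-I-2}=\comb{K-2}{K-I-1}$ and the ratio simplifications $\comb{K-2}{K-I-1}/\comb{K}{I}=\tfrac{I(K-I)}{K(K-1)}$ and $\comb{K-2}{K-I-2}/\comb{K}{I}=\tfrac{(K-I)(K-I-1)}{K(K-1)}$, this family's contribution reduces to $\tfrac{I(K-I)(1+C)}{K(K-1)}S_1+[\tfrac{(K-I)(K-I-1)}{K(K-1)}-C\tfrac{I(K-I)}{K(K-1)}]S_2$.

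Finally I would add the two contributions and check that the coefficient of $S_1$ equals $1$ and that of $S_2$ equals $0$; these are exactly the conditions for the right-hand side to equal $S_1=\sum_y P(y|x)\mathcal{L}(f(x),y)$. Both reductions hinge on the single algebraic identity
\begin{align}
I(2K-I-1)+(K-I)(K-I-1)=K(K-1),
\end{align}
which gives $1+C=K(K-1)/[I(2K-I-1)]$; substituting this makes the $S_1$-coefficient telescope to $\tfrac{I(1+C)}{K}\cdot\tfrac{2K-I-1}{K-1}=1$, while the $S_2$-coefficient vanishes because $-(K-1)+(2K-I-1)-(K-I)=0$ after factoring out $(K-I)(K-I-1)/K$. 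I expect the main obstacle to be purely bookkeeping in the size-$(K-I)$ family --- keeping the three subset counts straight and correctly separating the diagonal ($a=b$) from the off-diagonal ($a\ne b$) terms when expanding $(\sum_{a\in S}p_a)(\sum_{b\in S}L_b)$. Once the counts are correct, the specific value of $C$ in the statement is precisely what is forced by demanding that the $S_2$-coefficient vanish, so there is no further guesswork.
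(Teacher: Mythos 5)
Your proposal is correct --- the subset counts ($\comb{K-1}{K-I-1}$ for a fixed element, $\comb{K-2}{K-I-2}$ for a fixed pair, $\comb{K-2}{K-I-1}$ for an in/out pair), the ratio simplifications, and the final coefficient matching all check out --- but it runs the paper's argument in the reverse direction, with a different decomposition of the work. The paper first proves an inversion lemma (Lemma \ref{lem:which-one-pdf}): using only the probability-side counting identity (its Eq.\ \eqref{eq:which-one-recv-4}), it expresses
\begin{align*}
P(y|x) = \frac{K(K-1)}{I(2K-I-1)}\sum_{y \in \tilde{y} \in \mathfrak{B}_1(\mathcal{Y}) \cup \mathfrak{B}_{K-I}(\mathcal{Y})} \widetilde{P}_I(\tilde{y}|x) \;-\; \frac{(K-I)(K-I-1)}{I(2K-I-1)},
\end{align*}
substitutes this into $\sum_{y}\mathcal{L}(f(x),y)P(y|x)$, and then interchanges the order of summation ($\sum_{y}\sum_{\tilde{y}\ni y}=\sum_{\tilde{y}}\sum_{y\in\tilde{y}}$), so that $\widetilde{\mathcal{L}}_{I}$ \emph{emerges} as the integrand rather than being posited; the $\sum_{y\notin\tilde{y}}$ term appears by splitting the constant $\sum_{y\in\mathcal{Y}}\mathcal{L}(f(x),y)$ under the expectation. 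You instead take $\widetilde{\mathcal{L}}_{I}$ as given, expand $\sum_{\tilde{y}}\widetilde{P}_I(\tilde{y}|x)\widetilde{\mathcal{L}}_{I}(f(x),\tilde{y})$ by brute-force counting, and verify that the $S_1$-coefficient is $1$ and the $S_2$-coefficient is $0$. The price of your direction is heavier bookkeeping: because you multiply $\sum_{a\in S}p_a$ against both loss sums, you must track diagonal, off-diagonal, and in/out cross terms, whereas the paper's interchange-of-summation trick confines all counting to the probability side. What your direction buys is that the proof doubles as a derivation of the constant: as you observe, the vanishing of the $S_2$-coefficient forces $C=(K-I)(K-I-1)/[I(2K-I-1)]$, a point that is only implicit in the paper's lemma. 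Both routes ultimately rest on the same identity $I(2K-I-1)+(K-I)(K-I-1)=K(K-1)$. One shared caveat: when $I=K-1$ the families $\mathfrak{B}_1(\mathcal{Y})$ and $\mathfrak{B}_{K-I}(\mathcal{Y})$ coincide, so ``splitting into two families'' must be read as summing the two case contributions of Theorem \ref{thm:which-one-gen} rather than partitioning the label space; this affects the paper's proof equally and does not change the algebra.
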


This theorem shows that, if the loss function for learning by instances with the which-one-type Q$\&$A label is given by $\widetilde{\mathcal{L}}_{I}$, then its expectation with respect to $\widetilde{P}_I(X, \widetilde{Y})$ is consistent with the classification risk defined by Eq.\ \eqref{eq:pred_err}.
That is, this theorem defines a natural loss function for learning by instances with the which-one-type Q$\&$A labels. 
Note that, when the number of question items $I=K-1$, the loss function $\widetilde{\mathcal{L}}_{I}$ in the which-one-type Q$\&$A label learning is equivalent to the loss function $\mathcal{L}$ in ordinary label learning.

The Rademacher complexity of a set of real-valued functions $\mathcal{H}=\{h: \mathcal{X} \rightarrow \mathbb{R}\}$ is defined as follows:
\begin{align*}
    \mathfrak{R}_{n}(\mathcal{H})
    &= \mathbb{E}_{\mathcal{S}}\mathbb{E}_{\sigma}\left[
    \sup_{h\in\mathcal{H}}\frac{1}{n}
    \sum_{i=1}^{n}\sigma_i h(x_i)\right],
\end{align*}
where $S=\{x_1,...,x_n\}$ denotes the set of instances and $\sigma=\{\sigma_1,...,\sigma_n\}$ denotes the $n$ Rademacher variables that take $0$ or $1$ with equal probability independently.
Let $\widetilde{\mathcal{L}}_{I}$ given by Eq.\ \eqref{eq:pred_err_ord_which-one} be a Lipschitz continuous function and let the corresponding function set be
\begin{align*}
    \widetilde{\mathcal{H}}
    &=\{(x, \tilde{y})\mapsto\widetilde{\mathcal{L}}_{I}(f(x), \tilde{y})\mid
    f\in\mathcal{F}\}.
\end{align*}
Then, the following holds for the Rademacher complexity of $\widetilde{\mathcal{H}}$.

\begin{lemma}\label{lem:which_one_rade_loss}
    For the which-one-type Q$\&$A labeling, let $\rho$ be the Lipschitz coefficient of $\widetilde{\mathcal{L}}_{I}$, $g_y:\mathcal{X}\rightarrow \mathbb{R}$ be the binary discriminant function for classifying classes $y$ and others, and $\mathcal{G}_y=\{h: x\rightarrow g_y(x)\}$ be the set of real-valued functions corresponding to class $y$ over the instance space $\mathcal{X}$. Then, the Rademacher complexity of the function set $\widetilde{\mathcal{H}}$ satisfies
    \begin{align*}
        \mathfrak{R}_n(\widetilde{\mathcal{H}})\leq \frac{\sqrt{2}\rho K^2(K-1)}{I(2K-I-1)}\sum\limits_{y\in \mathcal{Y}} \mathfrak{R}_n(\mathcal{G}_y).
    \end{align*}
\end{lemma}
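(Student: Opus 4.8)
The plan is to bound $\mathfrak{R}_n(\widetilde{\mathcal{H}})$ in terms of the per-class complexities $\mathfrak{R}_n(\mathcal{G}_y)$ by exploiting that the loss $\widetilde{\mathcal{L}}_{I}$ in Eq.~\eqref{eq:loss-function_which-one} is \emph{affine} in the collection of base losses $\mathcal{L}(f(x),y)$, with each term carrying a coefficient that is either $+1$ (for $y\in\tilde{y}$) or $-c$ (for $y\notin\tilde{y}$), where $c=\frac{(K-I)(K-I-1)}{I(2K-I-1)}$. Since $f(x)=(g_1(x),\dots,g_K(x))$ is vector-valued and $\widetilde{\mathcal{L}}_I$ is $\rho$-Lipschitz, the natural engine is a vector-valued contraction (Talagrand/Maurer-type) inequality, which strips off the Lipschitz map at the cost of the factor $\sqrt{2}\rho$ and reduces the problem to the Rademacher complexity of the score class $\{x\mapsto(g_1(x),\dots,g_K(x))\}$. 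The appearance of $\sqrt{2}$ in the target bound is the signature of precisely this vector contraction step.

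Concretely, I would proceed as follows. First, write $\mathfrak{R}_n(\widetilde{\mathcal{H}})$ from its definition as the expected supremum over $f\in\mathcal{F}$ of $\frac{1}{n}\sum_i \sigma_i\,\widetilde{\mathcal{L}}_I(f(x_i),\tilde{y}_i)$. Second, substitute the explicit decomposition $\widetilde{\mathcal{L}}_I(f(x_i),\tilde{y}_i)=\sum_{y\in\mathcal{Y}} a_{iy}\,\mathcal{L}(f(x_i),y)$ with $a_{iy}\in\{+1,-c\}$. Third, apply sub-additivity of the supremum to split the expression into a sum of $K$ terms, one per class $y$. Fourth, invoke the contraction inequality to peel off the $\rho$-Lipschitz loss, yielding the factor $\sqrt{2}\rho$ and leaving the quantities $\mathfrak{R}_n(\mathcal{G}_y)$. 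The combinatorial constant $\frac{K^2(K-1)}{I(2K-I-1)}$ is then assembled by bounding the weights $a_{iy}$ and counting their multiplicities over the two admissible label families that $\tilde{y}$ ranges over, namely $\mathfrak{B}_1(\mathcal{Y})$ and $\mathfrak{B}_{K-I}(\mathcal{Y})$ (with complements of sizes $K-1$ and $I$, respectively); the fact that this constant shares the denominator $I(2K-I-1)$ with $c$ strongly suggests it arises directly from collecting these $+1$ and $-c$ contributions.

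The main obstacle is the bookkeeping in this last step. The weight $a_{iy}$ depends on the sample through $\tilde{y}_i$ and flips sign across classes, so the contraction must be applied with a uniform per-sample Lipschitz bound, and the two cases $|\tilde{y}|=1$ and $|\tilde{y}|=K-I$ must be combined so as to recover \emph{exactly} the stated constant rather than a looser estimate. I expect the quantity $\frac{K^2(K-1)}{I(2K-I-1)}$ to emerge from summing the $+1$ and $-c$ contributions over all $K$ classes and both label sizes, so verifying that these terms aggregate to precisely this value — rather than merely to some constant of the same order — is the delicate part. As a sanity check I would confirm the boundary case $I=K-1$, where $c=0$ (so $\widetilde{\mathcal{L}}_I$ collapses to the ordinary loss, consistent with the remark following Theorem~\ref{thm:which-one-loss}) and the constant reduces to $K$, giving a clean $\sqrt{2}\rho K\sum_{y}\mathfrak{R}_n(\mathcal{G}_y)$; this confirms the bookkeeping is internally consistent even though the resulting bound need not be tight.
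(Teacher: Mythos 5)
Your proposal is correct and follows essentially the same route as the paper's proof: the paper likewise expands $\widetilde{\mathcal{L}}_{I}$ as an affine combination of the base losses with coefficients $+1$ and $-c$ (encoded via the signs $\alpha_i = 2\mathbbm{1}(y \in \tilde{y}_i)-1$), splits by sub-additivity of the supremum, uses the symmetry of the Rademacher variables to absorb the data-dependent signs — which yields the factor $1+c=\frac{K(K-1)}{I(2K-I-1)}$ — and finishes with the Rademacher vector contraction inequality, which contributes the $\sqrt{2}\rho$ together with the remaining factor of $K$ (each $\mathcal{L}(f(x),y)$ depends on all $K$ discriminant functions, so contraction leaves $\sum_{y'}\mathfrak{R}_n(\mathcal{G}_{y'})$ for every class). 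The only difference is the order of bookkeeping (the paper splits into the $y\in\tilde{y}$ and $y\notin\tilde{y}$ parts before splitting over classes and applies contraction last), which is immaterial.
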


In general, because $\widetilde{P}_I(X,\widetilde{Y})$ is unknown, it is not possible to evaluate classification risk $R(f)$ based on the right side of Eq.\ \eqref{eq:pred_err_ord_which-one}.
As a practical alternative to minimizing the classification risk $R(f)$, we consider learning to minimize the empirical classification risk $\widehat{R}(f)$, which is defined as follows using $n$ samples, $\{(x_i, \tilde{y}_i)\}_{i=1}^n \thicksim \widetilde{P}_I(X,\widetilde{Y})$, obtained by the which-one-type Q$\&$A labeling.
\begin{align}
    \widehat{R}(f) = \frac{1}{n}\sum_{i=1}^n\widetilde{\mathcal{L}}_{I}(f(x_i), \tilde{y}_i).
    \label{eq:pt1_exp_err}
\end{align}
From Lemma \ref{lem:which_one_rade_loss}, the following holds for the hypothesis that minimizes $\widehat{R}(f)$, $\hat{f}:= \arg\min_{f\in\mathcal{F}}\widehat{R}(f)$ .
\begin{theorem}\label{thm:which_one_error}
    Let $C_L = \sup_{x\in\mathcal{X}, f\in\mathcal{F}}\mathcal{L}(f(x), Y)$ and $\rho$ be the Lipschitz coefficient of $\widetilde{\mathcal{L}}_{I}$.
    Then, the classification error $\mathcal{E}_{I}=R(\hat{f})-R(f^*)$ for the hypothesis $\hat{f}$ that minimizes $\widehat{R}(f)$ satisfies the following for any $\delta~(> 0)$ with a probability greater than $1 - \delta$.
    \begin{align}
    \mathcal{E}_{I} \leq
    &\frac{4\sqrt{2}\rho K^2(K-1)}{I(2K-I-1)}\sum\limits_{y\in \mathcal{Y}} \mathfrak{R}_n(\mathcal{G}_y) \nonumber\\
    &+ \frac{(K-I)(K^2+(I-2)K-I^2-1)}{I(2K-I-1)}C_{L}\sqrt{\frac{2\log{(2/\delta})}{n}}. \label{the:which_one_error}
    \end{align}
\end{theorem}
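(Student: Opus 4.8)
The plan is to prove Theorem~\ref{thm:which_one_error} by the standard uniform-convergence argument for empirical risk minimization, specialized to the loss $\widetilde{\mathcal{L}}_I$ and the generative model $\widetilde{P}_I(X,\widetilde{Y})$. The starting point is Theorem~\ref{thm:which-one-loss}, which guarantees that $R(f) = \mathbb{E}_{\widetilde{P}_I(X,\widetilde{Y})}[\widetilde{\mathcal{L}}_I(f(X),\widetilde{Y})]$, so that $\widehat{R}(f)$ in Eq.~\eqref{eq:pt1_exp_err} is an unbiased empirical estimate of $R(f)$. First I would record the usual estimation-error decomposition: since $\hat{f}$ minimizes $\widehat{R}$ over $\mathcal{F}$, we have $\widehat{R}(\hat f)\le\widehat{R}(f^*)$, and hence
\begin{align*}
\mathcal{E}_I = R(\hat f) - R(f^*) \le \bigl(R(\hat f)-\widehat R(\hat f)\bigr) + \bigl(\widehat R(f^*)-R(f^*)\bigr) \le 2\sup_{f\in\mathcal{F}}\bigl|R(f)-\widehat R(f)\bigr|.
\end{align*}
This reduces the theorem to a two-sided uniform deviation bound for the $\widetilde{\mathcal{L}}_I$-losses.

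Next I would control $\sup_{f\in\mathcal F}|R(f)-\widehat R(f)|$ by the classical symmetrization-plus-McDiarmid route. Viewing this supremum as a function of the $n$ i.i.d.\ samples $\{(x_i,\tilde y_i)\}$, a single sample perturbs it by at most $M/n$, where $M$ is the oscillation (range) of $\widetilde{\mathcal{L}}_I$ over all admissible $(f(x),\tilde y)$; McDiarmid's inequality then yields, with probability at least $1-\delta$, a deviation of order $M\sqrt{\log(2/\delta)/(2n)}$, while symmetrization replaces the expected supremum by $2\mathfrak{R}_n(\widetilde{\mathcal{H}})$. Invoking Lemma~\ref{lem:which_one_rade_loss} to bound $\mathfrak{R}_n(\widetilde{\mathcal{H}})$ by $\tfrac{\sqrt2\,\rho K^2(K-1)}{I(2K-I-1)}\sum_{y\in\mathcal Y}\mathfrak{R}_n(\mathcal G_y)$, and combining with the factor $2$ from the decomposition above, produces exactly the $4\sqrt2$-prefactor on the Rademacher term in Eq.~\eqref{the:which_one_error}.

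The one genuinely computational step --- and the place I expect the bookkeeping to be most delicate --- is evaluating the range $M$ of $\widetilde{\mathcal{L}}_I$. Because a which-one label $\tilde y$ is either a singleton ($|\tilde y|=1$, $|\tilde y^c|=K-1$) or a set of size $K-I$ ($|\tilde y^c|=I$), and each term satisfies $0\le\mathcal{L}(f(x),y)\le C_L$, the maximum of $\widetilde{\mathcal{L}}_I$ is attained by the size-$(K-I)$ label (positive sum equal to $(K-I)C_L$, subtracted sum zero) and the minimum by the singleton label (positive sum zero, subtracted sum $\tfrac{(K-I)(K-I-1)}{I(2K-I-1)}(K-1)C_L$, using that $K-1\ge I$). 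Taking the difference and simplifying the resulting polynomial over the common denominator $I(2K-I-1)$ gives $M=\tfrac{(K-I)(K^2+(I-2)K-I^2-1)}{I(2K-I-1)}\,C_L$; substituting this into the McDiarmid term and carrying the outer factor $2$ through $M\sqrt{\log(2/\delta)/(2n)}$ produces the $\sqrt{2\log(2/\delta)/n}$ form and the stated coefficient. The main obstacle is thus not conceptual but the careful identification of the extremizing label configurations among the two admissible label sizes and the algebraic simplification of the range, together with tracking the constants through symmetrization and McDiarmid so that both prefactors come out exactly as in Eq.~\eqref{the:which_one_error}.
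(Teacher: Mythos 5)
Your proposal is correct and takes essentially the same route as the paper's own proof: the same decomposition $\mathcal{E}_I \le 2\sup_{f\in\mathcal{F}}|R(f)-\widehat{R}(f)|$ via $\widehat{R}(\hat f)\le\widehat{R}(f^*)$, McDiarmid's inequality with bounded differences given by the range of $\widetilde{\mathcal{L}}_I$, symmetrization to $2\mathfrak{R}_n(\widetilde{\mathcal{H}})$, and Lemma \ref{lem:which_one_rade_loss}, with the extremal label configurations (size-$(K-I)$ label for the maximum, singleton for the minimum) matching the paper's bound on $\sup_f\{\widetilde{\mathcal{L}}_I(f(x),\tilde y)-\widetilde{\mathcal{L}}_I(f(x'),\tilde y')\}$. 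One minor note: the range you describe actually simplifies to $\frac{(K-I)(K^2+(I-2)K-I^2+1)}{I(2K-I-1)}C_L$ --- with $+1$, exactly as in the paper's own proof --- so the $-1$ appearing in the theorem statement (which you reproduced) is a sign typo in the paper rather than in your argument.
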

In Theorem \ref{thm:which_one_error}, the coefficients of the first and second terms on the right-hand side of Eq.\ \eqref{the:which_one_error} are monotonically decreasing with respect to $I$. 
Moreover, the statistical consistency can be guaranteed by the following arguments.
Let $\mathbb{H}$ be a reproducing kernel Hilbert space with a inner product $\langle\cdot,\cdot\rangle_{\mathbb{H}}$ defined over $\mathcal{X}$.
Let $k: \mathcal{X}\times\mathcal{X} \rightarrow \mathbb{R}$ be a corresponding positive definite symmetric kernel with $r^{2} = \sup_{x \in \mathcal{X}}k(x,x)$.
Additionally, let $\Phi: \mathcal{X} \rightarrow \mathbb{H}$ be a feature mapping associated with the reproducing kernel $k$.
Consider a linear-in-parameter model defined by
\begin{align*}
\mathcal{G}_{y} = \{g_{y}(x) = \langle\bm{w},\Phi(x)\rangle_{\mathbb{H}}: \|\bm{w}\|_{\mathbb{H}} \le \Lambda \}
\end{align*}
for some $\Lambda \ge 0$, where $\|\cdot\|_{\mathbb{H}}$ is a norm induced by $\langle\cdot,\cdot\rangle_{\mathbb{H}}$.
It is known that $\mathfrak{R}_n(\mathcal{G}_y) \le r\Lambda/\sqrt{n}$ \citep[Theorem 6.12]{mohri2012}, and thus when $n\rightarrow\infty$, we have $R(\hat{f})\rightarrow R(f^*)$, which shows that learning using the which-one-type Q$\&$A label has statistical consistency.

\subsection{Is-in-type Q\&A Labeling}
With the same discussion as in Sec.\ \ref{sec:analysis-which-one}, the following holds for learning based on ERM using instances with the is-in-type Q$\&$A label.
\begin{theorem}\label{thm:is-in-loss}
The classification risk defined in Eq.\ \eqref{eq:pred_err} is given by using the generative model of the is-in-type Q$\&$A label, $\widetilde{P}_I(X,\widetilde{Y})$, as follows:
\begin{align}
    R(f) = \mathbb{E}_{\widetilde{P}_I(X, \widetilde{Y})}\Bigl[\widetilde{\mathcal{L}}_{I}(f(X), \widetilde{Y})\Bigl],
    \label{eq:pred_err_ord_is-in}
\end{align}
where
\begin{align}
    \widetilde{\mathcal{L}}_{I}(f(x), \tilde{y})
    &=\sum_{y \in \tilde{y}}\mathcal{L}(f(x), y) - \frac{2I^2+K^2-K(2I+1)}{2I(K-I)}\sum_{y \notin \tilde{y}}\mathcal{L}(f(x), y).
    \label{eq:loss-function_is-in}
\end{align}
\end{theorem}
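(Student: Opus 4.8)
The plan is to establish the identity $R(f)=\mathbb{E}_{\widetilde{P}_I(X,\widetilde{Y})}[\widetilde{\mathcal{L}}_I(f(X),\widetilde{Y})]$ by showing that, at every fixed instance $x$, the conditional expectation of $\widetilde{\mathcal{L}}_I$ over the is-in-type label $\widetilde{Y}$ reproduces the ordinary conditional risk $\sum_{y\in\mathcal{Y}}P(y|x)\mathcal{L}(f(x),y)$. Since both sides factor as $\sum_x P(x)(\cdots)$, it suffices to argue at fixed $x$. Abbreviating $\ell_y:=\mathcal{L}(f(x),y)$ and $p_y:=P(y|x)$, the target reduces to $\sum_{\tilde{y}}\widetilde{P}_I(\tilde{y}|x)\,\widetilde{\mathcal{L}}_I(f(x),\tilde{y})=\sum_{y}p_y\ell_y$, where the outer sum runs over the admissible is-in-type labels, i.e.\ all subsets of $\mathcal{Y}$ of size $I$ (from the ``yes'' branch) and all subsets of size $K-I$ (from the ``no'' branch).

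First I would substitute the generative model of Theorem \ref{thm:is-in-gen}, $\widetilde{P}_I(\tilde{y}|x)=\tfrac{1}{\comb{K}{I}}\sum_{y'\in\tilde{y}}p_{y'}$, and expand its product with $\widetilde{\mathcal{L}}_I$. This yields a double sum over ordered pairs $(y',y)$ with weight $p_{y'}\ell_y$, naturally split into a positive contribution (coming from $\sum_{y\in\tilde{y}}$) and a negative contribution (coming from $-c\sum_{y\notin\tilde{y}}$, where $c=\tfrac{2I^2+K^2-K(2I+1)}{2I(K-I)}$ is the coefficient in the statement). After interchanging the order of summation, the coefficient multiplying each $p_{y'}\ell_y$ becomes a pure combinatorial count: the number of admissible labels $\tilde{y}$ exhibiting the prescribed membership of $y'$ and $y$.

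The combinatorial heart is to evaluate these counts across the two size regimes. For diagonal pairs $y'=y$, the number of size-$I$ plus size-$(K-I)$ subsets containing a fixed class is $\comb{K-1}{I-1}+\comb{K-1}{I}=\comb{K}{I}$ by Pascal's rule, so after dividing by $\comb{K}{I}$ the diagonal part of the positive contribution is exactly $\sum_y p_y\ell_y$. For off-diagonal pairs $y'\neq y$, the positive contribution carries the count $a:=\comb{K-2}{I-2}+\comb{K-2}{I}$ (both $y'$ and $y$ inside $\tilde{y}$), while the negative contribution carries $b:=2\comb{K-2}{I-1}$ ($y'$ inside and $y$ outside, summed over both regimes using $\comb{K-2}{K-I-1}=\comb{K-2}{I-1}$). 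I would then check that the net off-diagonal coefficient $a-c\,b$ vanishes: reducing $a/b$ via $\comb{K-2}{I-2}/\comb{K-2}{I-1}=(I-1)/(K-I)$ and $\comb{K-2}{I}/\comb{K-2}{I-1}=(K-I-1)/I$ collapses it to $\tfrac{1}{2}\bigl(\tfrac{I-1}{K-I}+\tfrac{K-I-1}{I}\bigr)$, which simplifies precisely to $c$. Hence every off-diagonal term cancels and only the diagonal survives, giving the claim.

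The main obstacle is careful bookkeeping rather than any conceptual difficulty: I must track the size-$I$ and size-$(K-I)$ branches in tandem throughout every subset count and apply the complementary identities (such as $\comb{K-1}{K-I-1}=\comb{K-1}{I}$ and $\comb{K-2}{K-I-1}=\comb{K-2}{I-1}$) so that the two regimes combine cleanly. The argument runs parallel to the proof of Theorem \ref{thm:which-one-loss}; the only substantive difference is that the is-in-type assigns both label sizes with the single weight $1/\comb{K}{I}$, and it is exactly this symmetry that produces the balanced off-diagonal count $b=2\comb{K-2}{I-1}$ forcing the coefficient $c$.
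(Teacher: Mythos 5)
Your proof is correct, and it takes a genuinely different (reversed) route from the paper's. The paper proceeds in two steps: it first proves an inversion lemma expressing the ordinary-label posterior as an affine function of the marginal inclusion probability, namely $P(y|x) = \frac{K(K-1)}{2I(K-I)}\sum_{y \in \tilde{y}} \widetilde{P}_I(\tilde{y}|x) - \frac{2I^2+K^2-K(2I+1)}{2I(K-I)}$, by summing the generative model over all labels of size $I$ and of size $K-I$ containing $y$; it then substitutes this into $\sum_{y}\mathcal{L}(f(x),y)P(y|x)$, exchanges the order of summation, and splits $\sum_{y\in\mathcal{Y}} = \sum_{y\in\tilde{y}} + \sum_{y\notin\tilde{y}}$ to make the loss $\widetilde{\mathcal{L}}_I$ emerge. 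You instead start from the Q\&A-label side, expand $\sum_{\tilde{y}}\widetilde{P}_I(\tilde{y}|x)\widetilde{\mathcal{L}}_I(f(x),\tilde{y})$ into pair counts, and show that the off-diagonal terms cancel precisely because $\comb{K-2}{I-2}+\comb{K-2}{I} = 2c\,\comb{K-2}{I-1}$ with $c$ the stated coefficient, leaving only the diagonal $\sum_y P(y|x)\mathcal{L}(f(x),y)$ via Pascal's rule. The underlying combinatorics is identical (the paper's auxiliary identity $\sum_{\tilde{y}\in\mathfrak{B}_J}\mathbbm{1}(\alpha\in\tilde{y})\sum_{y\in\tilde{y}}P(y|x) = \comb{K-2}{J-1}P(\alpha|x)+\comb{K-2}{J-2}$ is exactly your pair count in disguise), but the organization differs: the paper's inversion lemma is constructive, showing how the loss function and its coefficient are derived from solving for $P(y|x)$, and it is reusable in the corollary proofs; your direct verification is more elementary, avoids the intermediate lemma, and still effectively forces the value of $c$ through the cancellation condition $a = cb$, so it does not merely presuppose the coefficient. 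One shared blind spot, not held against you: both arguments treat the size-$I$ and size-$(K-I)$ label families as disjoint, which silently degenerates when $K = 2I$; the paper glosses over this edge case exactly as you do.
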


\begin{lemma}\label{lem:is_in_rade_loss}
    For the is-in-type Q$\&$A labeling, let $\rho$ be the Lipschitz coefficient of $\widetilde{\mathcal{L}}_{I}$, $g_y:\mathcal{X}\rightarrow \mathbb{R}$ be the binary discriminant function for classifying classes $y$ and others, and $\mathcal{G}_y=\{h:x\rightarrow g_y(x)\}$ be the set of real-valued functions corresponding to class $y$ over the instance space $\mathcal{X}$. Then, the Rademacher complexity of the function set $\widetilde{\mathcal{H}}$ satisfies
    \begin{align*}
        \mathfrak{R}_n(\widetilde{\mathcal{H}})\leq \frac{\sqrt{2}\rho K^2(K-1)}{2I(K-I)}\sum\limits_{y\in \mathcal{Y}} \mathfrak{R}_n(\mathcal{G}_y).
    \end{align*}
\end{lemma}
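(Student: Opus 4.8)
The plan is to reuse the argument behind Lemma~\ref{lem:which_one_rade_loss}, since the only structural change is the coefficient appearing in the loss. First I would insert the is-in loss from Eq.~\eqref{eq:loss-function_is-in} into the definition of $\mathfrak{R}_n(\widetilde{\mathcal{H}})$ and rewrite it as a single signed sum over the whole class set,
\begin{align*}
\widetilde{\mathcal{L}}_{I}(f(x),\tilde y) = \sum_{y\in\mathcal{Y}} w_y(\tilde y)\,\mathcal{L}(f(x),y),\qquad
w_y(\tilde y)=
\begin{cases}
1,& y\in\tilde y,\\
-c,& y\notin\tilde y,
\end{cases}
\end{align*}
where $c=\tfrac{2I^{2}+K^{2}-K(2I+1)}{2I(K-I)}$ is the bracketed constant in Eq.~\eqref{eq:loss-function_is-in}. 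This recasts each member of $\widetilde{\mathcal{H}}$ as a fixed linear combination of the per-class losses $(x,\tilde y)\mapsto\mathcal{L}(f(x),y)$ indexed by $y\in\mathcal{Y}$.

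Next I would peel the complexity apart in two stages. Using the sub-additivity of the supremum and of the expectation over the Rademacher variables, the complexity of a sum of function classes is bounded by the sum of their complexities; absorbing the sign of each fixed weight $w_y(\tilde y)$ into the symmetric Rademacher variables lets me split $\mathfrak{R}_n(\widetilde{\mathcal{H}})$ into a weighted sum, over $y\in\mathcal{Y}$, of the complexities of the classes $\{(x,\tilde y)\mapsto\mathcal{L}(f(x),y)\}$. I would then invoke the vector-valued Talagrand (Ledoux--Talagrand/Maurer) contraction inequality to strip off the loss: contraction replaces $\mathcal{L}$ by the underlying discriminant functions, contributing the Lipschitz factor $\rho$ and the $\sqrt2$ characteristic of the vector-valued version, and leaving the single-class complexities $\mathfrak{R}_n(\mathcal{G}_y)$. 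This is exactly the mechanism that produces the $\sqrt2\,\rho\sum_{y}\mathfrak{R}_n(\mathcal{G}_y)$ skeleton shared by this lemma and Lemma~\ref{lem:which_one_rade_loss}, so the bulk of the derivation is a direct transcription of the which-one argument.

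The step I expect to be the main obstacle is the bookkeeping of the combinatorial prefactor $\tfrac{K^{2}(K-1)}{2I(K-I)}$, which accumulates from the weights $w_y(\tilde y)$ rather than from the contraction. The two admissible label sizes for the is-in type --- $|\tilde y|=I$ in the ``yes'' case and $|\tilde y|=K-I$ in the ``no'' case --- yield different total weights $\sum_{y}|w_y(\tilde y)|$, namely $I+(K-I)c$ and $(K-I)+Ic$. I must bound both uniformly and combine them with the per-class contraction so that, after substituting the value of $c$, the constant collapses to precisely $\tfrac{K^{2}(K-1)}{2I(K-I)}$ rather than a looser multiple. Verifying that this worst case over the two label sizes simplifies exactly to the stated coefficient is the only genuinely computational part of the proof; everything preceding it is structural and mirrors the which-one case.
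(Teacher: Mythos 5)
Your structural plan coincides with the paper's proof: the paper likewise treats $\widetilde{\mathcal{L}}_{I}$ as a signed combination of per-class losses, splits it by sub-additivity of the supremum, uses the symmetry of the Rademacher variables to absorb the signs (via the device $\alpha_i = 2\mathbbm{1}(y\in\tilde y_i)-1$, which inflates the in-label and out-of-label partial sums to full sums over $\mathcal{Y}$), and finishes with the Rademacher vector contraction inequality, which is where the $\sqrt{2}\rho$ enters. Two points you gloss over but that are needed (and are implicit in the paper): the sign absorption is only legitimate after the per-class split, because the sign pattern of $w_y(\tilde y_i)$ varies with $y$ while $\sigma_i$ is shared across classes within a sample; and after absorbing signs the magnitudes $|w_y(\tilde y_i)|\in\{1,c\}$ still vary with $i$, so replacing them by a uniform constant requires a further (scalar) contraction step rather than mere inspection.

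The genuine problem is your final bookkeeping paragraph. The coefficient $\frac{K^2(K-1)}{2I(K-I)}$ does not arise from the per-sample weight totals $I+(K-I)c$ and $(K-I)+Ic$, and the ``exact collapse'' you intend to verify is false. In the paper the constant factors as $K\cdot(1+c)$: the factor $1+c=\frac{K(K-1)}{2I(K-I)}$ appears because the in-label group (weight $1$) and the out-of-label group (weight $c$) are each separately bounded by the complexity of the full sum $\sum_{y\in\mathcal{Y}}\mathcal{L}(f(x_i),y)$, so their weights add; the remaining factor $K$ comes from the contraction stage, since that full sum is split into $K$ per-class complexities and each application of the vector contraction inequality produces the entire sum $\sum_{y}\mathfrak{R}_n(\mathcal{G}_y)$. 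So, contrary to your attribution, part of the constant does come from the contraction, not from the weights. Your accounting, carried out correctly, yields $\max\{I+(K-I)c,\,(K-I)+Ic\}$ (or $K\max(1,c)$ on the per-class route), which is strictly smaller than $K(1+c)$ in general --- e.g.\ for $K=10$, $I=5$ your totals are both $9$, while the lemma's coefficient (without $\sqrt{2}\rho$) is $18$. This mismatch is benign: since your constant is always at most $K(1+c)$, your bound implies the stated one, so the plan does prove the lemma (indeed a tighter form of it). But you should expect an inequality in the favorable direction, not an identity; if you treat failure of the exact collapse as a sign of error, you will be chasing a mistake that is not there.
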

\begin{theorem}\label{thm:is_in_error}
    Let $C_L = \sup_{x\in\mathcal{X}, f\in\mathcal{F}}\mathcal{L}(f(x), Y)$ and $\rho$ be the Lipschitz coefficient of $\widetilde{\mathcal{L}}_{I}$.
    Then, the classification error $\mathcal{E}_{I}=R(\hat{f})-R(f^*)$ for the hypothesis $\hat{f}$ that minimizes $\widehat{R}(f)$ satisfies the following for any $\delta~(> 0)$, with a probability greater than $1 - \delta$.
    \begin{align}
    \mathcal{E}_{I} \leq 
        \frac{\sqrt{2}\rho K^2(K-1)}{I(K-I)}\sum\limits_{y\in \mathcal{Y}} \mathfrak{R}_n(\mathcal{G}_y) + \frac{K(K-1)}{2\min{(I, K-I)}}C_{L}\sqrt{\frac{2\log{(2/\delta})}{n}} 
       \label{the:is_in_error}
    \end{align}
\end{theorem}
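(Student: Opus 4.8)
The plan is to follow the same uniform-convergence route used for Theorem~\ref{thm:which_one_error}, now feeding in the is-in-specific ingredients, namely the loss $\widetilde{\mathcal{L}}_I$ of Eq.~\eqref{eq:loss-function_is-in} and the Rademacher bound of Lemma~\ref{lem:is_in_rade_loss}. First I would record the standard excess-risk decomposition. Since $\hat f$ minimizes $\widehat R$ we have $\widehat R(\hat f) - \widehat R(f^*) \le 0$, so writing
\begin{align*}
\mathcal{E}_I = R(\hat f) - R(f^*) = \bigl(R(\hat f) - \widehat R(\hat f)\bigr) + \bigl(\widehat R(\hat f) - \widehat R(f^*)\bigr) + \bigl(\widehat R(f^*) - R(f^*)\bigr)
\end{align*}
gives $\mathcal{E}_I \le 2\sup_{f\in\mathcal{F}}\lvert R(f) - \widehat R(f)\rvert$. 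By Theorem~\ref{thm:is-in-loss} the risk $R(f)$ equals $\mathbb{E}_{\widetilde P_I(X,\widetilde Y)}[\widetilde{\mathcal{L}}_I(f(X),\widetilde Y)]$ and $\widehat R$ is its empirical counterpart from Eq.~\eqref{eq:pt1_exp_err}, so both are an expectation and an average of the single function class $\widetilde{\mathcal{H}}$, and the problem reduces to uniformly controlling this deviation.

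Next I would bound the uniform deviation. Treating $\Psi(\mathcal{S}) = \sup_{f}\bigl(R(f)-\widehat R(f)\bigr)$ as a function of the $n$ independent samples, changing one sample perturbs $\widehat R$ by at most $M/n$, where $M$ is a uniform bound on the range of $\widetilde{\mathcal{L}}_I$; McDiarmid's bounded-differences inequality then yields, with probability at least $1-\delta/2$, a deviation of $\Psi(\mathcal{S})$ from its mean of at most $M\sqrt{\log(2/\delta)/(2n)}$. The standard symmetrization step bounds $\mathbb{E}_{\mathcal{S}}[\Psi]$ by $2\mathfrak{R}_n(\widetilde{\mathcal{H}})$, and applying the identical argument to $\sup_f(\widehat R - R)$ together with a union bound turns the two-sided control into $\mathcal{E}_I \le 4\mathfrak{R}_n(\widetilde{\mathcal{H}}) + M\sqrt{2\log(2/\delta)/n}$. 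Substituting the Rademacher estimate of Lemma~\ref{lem:is_in_rade_loss} then produces the first term of Eq.~\eqref{the:is_in_error}.

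The remaining work, and where I expect the real effort to lie, is pinning down the constant $M$ so that it matches $K(K-1)/\{2\min(I,K-I)\}\cdot C_L$. The subtlety specific to the is-in-type is that the assigned label $\tilde y$ is $Q$ of size $I$ when $Z\in Q$ and $\mathcal{Y}\setminus Q$ of size $K-I$ when $Z\notin Q$, so $\lvert\tilde y\rvert\in\{I,K-I\}$ and the range of $\widetilde{\mathcal{L}}_I$ must be estimated in both cases. Using $0\le \mathcal{L}\le C_L$ in Eq.~\eqref{eq:loss-function_is-in}, the extreme values in each case are attained by making the positive sum as large and the subtracted sum as small as possible (and vice versa), giving a range of the form $\bigl(\lvert\tilde y\rvert + c\,(K-\lvert\tilde y\rvert)\bigr)C_L$ with $c = \{2I^2+K^2-K(2I+1)\}/\{2I(K-I)\}$. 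I would then verify by routine algebra that both the $\lvert\tilde y\rvert=I$ and the $\lvert\tilde y\rvert=K-I$ expressions are dominated by $K(K-1)/\{2\min(I,K-I)\}\cdot C_L$, the worst case occurring at the smaller of $I$ and $K-I$, which is exactly where the $\min(I,K-I)$ in the denominator originates. Collecting the two terms completes the bound, and the statistical-consistency remark follows verbatim as in Section~\ref{sec:analysis-which-one} once $\mathfrak{R}_n(\mathcal{G}_y)=O(1/\sqrt n)$ is invoked.
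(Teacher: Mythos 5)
Your overall route is the same as the paper's: the decomposition $\mathcal{E}_I \le 2\sup_{f\in\mathcal{F}}\lvert R(f)-\widehat{R}(f)\rvert$, McDiarmid's bounded-differences inequality, symmetrization to $2\mathfrak{R}_n(\widetilde{\mathcal{H}})$, and substitution of Lemma~\ref{lem:is_in_rade_loss}. The one genuine gap is in how you pin down the McDiarmid constant $M$. You estimate the range of $\widetilde{\mathcal{L}}_{I}$ separately for each label size, obtaining $\bigl(\lvert\tilde{y}\rvert+c(K-\lvert\tilde{y}\rvert)\bigr)C_L$ with the \emph{same} $\tilde{y}$ in the positive and subtracted sums. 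But the bounded difference compares two independent samples $(x,\tilde{y})$ and $(x',\tilde{y}')$ whose labels may have \emph{different} sizes, and the worst case is precisely the mismatched one: $\lvert\tilde{y}\rvert=\max(I,K-I)$ paired with $\lvert\tilde{y}'\rvert=\min(I,K-I)$, which gives $\bigl\{\lvert\tilde{y}\rvert+c(K-\lvert\tilde{y}'\rvert)\bigr\}C_L=\max(I,K-I)(1+c)C_L$; since $1+c=\frac{K(K-1)}{2I(K-I)}$ and $I(K-I)=\min(I,K-I)\max(I,K-I)$, this equals $\frac{K(K-1)}{2\min(I,K-I)}C_L$ exactly. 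Your matched-size ranges are strictly smaller whenever $I\neq K/2$ (for $K=10$, $I=3$ they are $11C_L$ and roughly $10.4C_L$, versus the true range $15C_L$), so showing that they are dominated by the target constant does not certify that the bounded difference is at most $M/n$; and had you taken the maximum of the matched-size ranges as $M$, the McDiarmid step would be invalid. The fix is what the paper does in Eq.~\eqref{eq:sup_L-L_i_i}: keep $\lvert\tilde{y}\rvert$ and $\lvert\tilde{y}'\rvert$ as separate variables and maximize over both.

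A second, smaller point of bookkeeping: your chain gives $\mathcal{E}_I\le 4\mathfrak{R}_n(\widetilde{\mathcal{H}})+M\sqrt{2\log(2/\delta)/n}$, and substituting Lemma~\ref{lem:is_in_rade_loss} makes the first term $\frac{2\sqrt{2}\rho K^2(K-1)}{I(K-I)}\sum_{y\in\mathcal{Y}}\mathfrak{R}_n(\mathcal{G}_y)$, which is \emph{twice} the coefficient appearing in Eq.~\eqref{the:is_in_error}, not equal to it as you assert. To be fair, the paper's own final display contains the same factor-of-two slip: in the which-one case the final coefficient is four times that of Lemma~\ref{lem:which_one_rade_loss} (consistent with your accounting and with Theorem~\ref{thm:which_one_error}), whereas here the stated coefficient is only twice that of Lemma~\ref{lem:is_in_rade_loss}. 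So your derivation is the internally consistent one; the claim that it reproduces the stated first term is what is inaccurate.
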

Note that when the upper bound in Theorem \ref{thm:is_in_error} is treated as a function of $I$, it takes its minimum value when $I=K/2$. 
In addition, learning using the is-in-type Q$\&$A label has statistical consistency as with learning using the which-one-type Q$\&$A label.
\section{Experiment}\label{sec:experiment}
In Sec.\ \ref{sec:analysis}, we theoretically evaluated the upper bound of classification errors of classifiers trained on instances labeled with Q\&A labels as training data.
The upper bound of classification errors monotonically decreases with the number of question items in the which-one-type, and takes smaller values as the number of question items approaches $K/2$ in the is-in-type.
However, we can only theoretically evaluate the upper bound of classification errors and cannot directly evaluate the classification error.
Therefore, we quantitatively evaluate the classification error through experiments.

\subsection{Generation of Datasets}\label{sec:experiment_pre}
In the experiment, we used the MNIST\footnote{\url{http://yann.lecun.com/exdb/mnist/}}, Kuzushiji-MNIST\footnote{\url{https://github.com/rois-codh/kmnist}}, and Fashion-MNIST\footnote{\url{https://github.com/zalandoresearch/fashion-mnist}} datasets.
The summary of the datasets is presented in Table \ref{tab:dataset}.
We applied the two Q\&A labeling procedures described in Sec.\  \ref{sec:analysis} to assign Q\&A labels to each instance in the three datasets.
First, the question generator randomly selects classes from the total class set according to the predetermined number of question items to generate the question class set.
Here, the ordinary labels already assigned to these datasets used in the experiment are treated as the class that the annotator considers most appropriate for each instance.
Next, the Q\&A label is determined by whether or not its ordinary label is included in the question class set.
We perform this process for all possible numbers of question items in the two types of Q\&A labeling.
With this process, the Q\&A label $\widetilde{y}$ assigned to an instance $x$ will follow the generative model defined by Eq.\  \eqref{eq:generative_model_which_one} or Eq.\  \eqref{eq:generative_model_is_in}.

\begin{table}[t]\centering
\caption{
Datasets used in the experiment.
}\scalebox{1.0}{
  \begin{tabular}{c|c c c c} \hline
     & MNIST & Kuzushiji-MNIST & Fashion-MNIST \\ \hline
     Number of classes & 10 & 10 & 10 \\
     Number of dimensions & 28$\times$28 & 28$\times$28 & 28$\times$28 \\
     Color mode & gray & gray & RGB \\
     Number of data & 60,000 & 60,000 & 60,000 \\ \hline
  \end{tabular}
  }\label{tab:dataset}
\end{table}

\begin{figure*}[t]\centering
    \subfigure[which-one-type]{
		\includegraphics[width=1.00\linewidth]{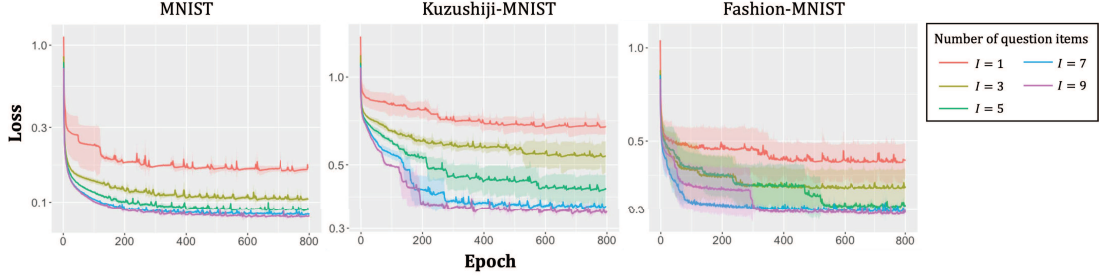}}
    \subfigure[is-in-type]{
		\includegraphics[width=1.00\linewidth]{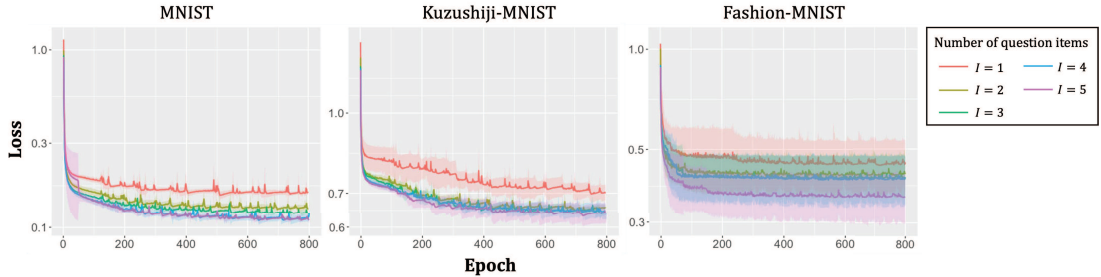}}
    \caption{The average and standard deviation of the classification accuracy on the test data for a 10-class classifier trained on instances labeled with different types of Q\&A labeling and a varying number of question items $I$. The colors indicate the number of question items in the Q\&A labeling used to label each instance in the training data. The vertical axis is shown on a logarithmic scale.}
\label{fig:learning}
\end{figure*}

\subsection{Experimental Settings}
We compare the classification errors of classifiers trained using training data labeled with Q\&A labels by Q\&A labeling for each instance of three datasets: MNIST, Kuzushiji-MNIST, and Fashion-MNIST.
Each dataset contains 10 classes.
The comparison was conducted by varying the number of question items used to obtain the Q\&A labels.
Assuming the existence of a classifier $f^{*} \in \mathcal{F}$ that can classify all instances as their ground truth class, it holds that $R(f^{*})=0$.
Under this assumption, the classification error $\mathcal{E}_{I}=R(\hat{f})-R(f^{*})$ is equivalent to the predictive discrimination error $R(\hat{f})$ in Eq.\ \eqref{eq:pred_err}.
In practice, the predictive classification risk $R(\hat{f})$ cannot be obtained, thus instead, we evaluate the empirical classification risk $\widehat{R}(\hat{f})$ of the test data where the ordinary labels are assigned.
Note that the empirical classification risk $\widehat{R}(f)$ of classifier $f$ is defined using $n$ test samples, ${(x_i, y_i)}_{i=1}^n \thicksim P(X,Y)$, as follows.
\begin{align}
    \widehat{R}(f) = \frac{1}{n}\sum_{i=1}^n\mathcal{L}(f(x_i), y_i).
    \label{eq:exp_err}
\end{align}

Using the ordinary labels pre-assigned to the dataset, we extract 1,000 instances for each class and annotate them with Q\&A labels using the method in Sec.\ \ref{sec:experiment_pre} as the training data.
Each instance is randomly selected from the original datasets.
The classifier is a neural network with one hidden layer containing 500 nodes.
The activation function used for the hidden layer is the relu function, and the softmax function is used for the output layer.
The batch size is set to 500 during training.
Equations \eqref{eq:loss-function_which-one} and \eqref{eq:loss-function_is-in} are used as the loss function, and MAE (Mean Absolute Error) is used as the function $\mathcal{L}$.
The function $\mathcal{L}$ is also used to calculate the empirical classification error $\widehat{R}(\hat{f})$ in Eq.\ \eqref{eq:exp_err}.
MAE is defined as follows:
\begin{align*}
    \mathcal{L}(f(x), y) = \sum_{k=1}^{K}\mid f(x)_k - y_k\mid
\end{align*}
The classifier is $f:\mathcal{X}\rightarrow \mathbb{R}^K$, and $y\in \mathbb{R}^K$ is a one-hot vector that takes the value of 1 at its corresponding position and 0 elsewhere.
Each subscript represents the position of the vector.

The optimization algorithm used was Adam \citep{kingma2015}, where the number of times to repeatedly train on a single training data instance (i.e., epoch) was set to 800, weight decay was set to $10^{-3}$, and the learning rate was set to $10^{-2}$.
The process of assigning Q\&A labels and training the classifier is performed $5$ times.

\subsection{Experimental Results}
The experimental results are shown in Fig.\ \ref{fig:learning}.
For the which-one-type, the results are displayed for $I=1,3,5,7,9$.
For the is-in-type, the results are displayed for $I=1,2,3,4,5$, as the performance of the classifier is equivalent for $I=J$ and $I=10-J$.

For the classifier trained on instances labeled with which-one-type Q\&A labeling, it shows that the empirical discriminant error on the test data decreases as the number of question items increases. 
For the classifier trained on instances labeled with is-in-type Q\&A labeling, it shows that the empirical discriminant error on the test data decreases as the number of question items increases when $I<5~(=K/2)$.
Therefore, it can be concluded that the classification error of the classifier trained using Q\&A labels exhibits the same behavior as the upper bound of the classification error that was theoretically shown in Sec.\ \ref{sec:analysis}.

\section{Conclusion}\label{sec:conclusion}
In this paper, we proposed Q$\&$A labeling, an annotation method that is feasible even when ordinal labels are difficult to assign.
The proposed Q$\&$A labeling is executed by a question generator that asks questions about the labels to be assigned and an annotator that answers the questions and makes annotations.
The question generator asks a question by presenting the annotator with the question class set, which is a subset of the total class set.
Depending on how the question was asked, the annotator answers the question by considering only the classes included in the question class set.
In this paper, we proposed the which-one-type Q$\&$A labeling and the is-in-type Q$\&$A labeling, which differ in the manner in which questions are asked and answered.
We then derived a generative model for the labels assigned by these two types of Q$\&$A labeling, respectively.
We also showed that the derived generative model is partially equivalent to that assumed in existing studies as a generative model for the candidate label, which is a generalization of the complementary label.
This implies that the proposed annotation method provides a specific procedure for realizing the generative model assumed in previous studies and guarantees the validity of the theoretical analysis performed in existing studies based on this assumption.
In addition, we derived a loss function to evaluate the classification risk and upper bound of the classification error using the Q$\&$A labels.
The derived classification errors indicate that there is statistical consistency in learning using the Q$\&$A labels.
Experimental also showed that the classification error itself has the same behavior as the upper bound of the classification error.
Future work will include the optimization of question/answer methods and the extraction of question class sets.
\section*{Appendix A: Which-one-type Q\&A Labeling}\label{sec:calculate1}
\subsection*{Proof of Theorem \ref{thm:which-one-gen}}
\begin{lemma}\label{lem:which-one-y-given-x-and-z}
When the class assumed by the annotator for instance $x$ is $z$, the generative model of the which-one-type Q$\&$A label for that instance is given as follows:
\begin{align}
\Pr\{Y=\{y\}|X=x, Z=z\} &=
\begin{cases}
\displaystyle\frac{I}{K}, & \text{if $z = y$} \\
0, & \text{if $z \neq y$} \\
\end{cases},
\quad \forall y \in \mathcal{Y},
\label{eq:which-one-gen-1}
\\
\Pr\{Y=\mathcal{Y}\backslash q|X=x, Z=z\} &=
\begin{cases}
0, & \text{if $z \in q$} \\
\displaystyle\frac{1}{{}_K C_I}, & \text{if $z \not\in q$} \\
\end{cases},
\quad \forall q \in \mathfrak{B}_{I}(\mathcal{Y}).
\label{eq:which-one-gen-2}
\end{align}
\end{lemma}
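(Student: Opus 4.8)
The plan is to condition on the event $\{X=x,\,Z=z\}$ and exploit the fact that, once $x$ and $z$ are fixed, the only remaining randomness in the which-one-type procedure is the choice of question class set $Q$. By Step~1 this set is drawn uniformly from $\mathfrak{B}_I(\mathcal{Y})$ and independently of both the instance and the annotator's inferred class, so that $\Pr\{Q=q\mid X=x,Z=z\}=1/\comb{K}{I}$ for every $q\in\mathfrak{B}_I(\mathcal{Y})$. Everything then reduces to identifying, for each target label, the corresponding event on $Q$ and counting the subsets that realize it.

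First I would treat the singleton case of Eq.~\eqref{eq:which-one-gen-1}. By the Step~3 rule, the procedure outputs a singleton label only when $z\in Q$, and in that case the assigned label is exactly $\{z\}$. Hence $\Pr\{Y=\{y\}\mid X=x,Z=z\}=0$ whenever $y\neq z$, while for $y=z$ it equals $\Pr\{z\in Q\}$. Since the number of size-$I$ subsets of $\mathcal{Y}$ containing $z$ is $\comb{K-1}{I-1}$, this probability is $\comb{K-1}{I-1}/\comb{K}{I}=I/K$, which gives the claimed value.

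Next I would treat the complement case of Eq.~\eqref{eq:which-one-gen-2}. Here the key observation is that complementation $q\mapsto\mathcal{Y}\backslash q$ is a bijection on subsets, so the event $\{\mathcal{Y}\backslash Q=\mathcal{Y}\backslash q\}$ coincides with $\{Q=q\}$. The procedure outputs a complement label only when $z\notin Q$; therefore, if $z\in q$ the event is impossible and its probability is $0$, whereas if $z\notin q$ the event is exactly $\{Q=q\}$, whose probability is $1/\comb{K}{I}$. This matches the two branches of the claim.

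The computation is essentially routine combinatorial bookkeeping, so I do not anticipate a genuine obstacle. The only points requiring care are the bijection argument in the complement case and the correct identification of the zero-probability sub-cases, namely verifying that $\{y\}$ with $y\neq z$, and $\mathcal{Y}\backslash q$ with $z\in q$, truly cannot be produced by the procedure rather than merely being rare, together with the elementary identity $\comb{K-1}{I-1}/\comb{K}{I}=I/K$.
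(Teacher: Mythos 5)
Your proposal is correct and follows essentially the same route as the paper's proof: condition on $(x,z)$, use the uniform and independent draw of $Q$ over $\mathfrak{B}_I(\mathcal{Y})$, identify which event on $Q$ produces each label, and count, with $\comb{K-1}{I-1}/\comb{K}{I}=I/K$ for the singleton case and $\Pr\{Q=q\}=1/\comb{K}{I}$ for the complement case. Your write-up is in fact slightly cleaner than the paper's (which phrases the singleton case as a marginalization over $q$), but the underlying argument is the same.
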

\begin{proof}
First, we derive Eq.\ \eqref{eq:which-one-gen-1}.
This is trivial when $z \neq y$.
For $z = y$, the following holds:
\begin{align*}
\Pr\{Y=\{y\}|X=x, Z=z\}
&= \sum_{q \in \mathfrak{B}_I(\mathcal{Y})}\Pr\{y \in q|X=x, Z = z, Q=q\}\Pr\{Q=q\}\\
&= {}_K C_I \frac{I}{K}\frac{1}{{}_K C_I}\\
&= \frac{I}{K},
\end{align*}
where we used that $\Pr\{y \in q|X=x, Z = z, Q=q\} = I/K$ when $z = y$.

Next, we derive Eq.\ \eqref{eq:which-one-gen-2}.
This is trivial for $z \in q$.
For $z \in q$, the following holds.
\begin{align*}
\Pr\{Y=\mathcal{Y}\backslash q|X=x, Z=z\}
&= \Pr\{z \notin q|X=x, Z=z, Q=q\} \Pr\{Q=q\}\\
&= \frac{1}{{}_K C_I},
\end{align*}
where we used that $\Pr\{z \notin q|X=x, Z=z, Q=q\} = 1$ when $z \not\in q$.
\end{proof}

Using Lemma \ref{lem:which-one-y-given-x-and-z}, we can prove Theorem \ref{thm:which-one-gen}.
First, for the instance, $x$, the probability that label $\{y\}~(\in \mathfrak{B}_1(\mathcal{Y}))$ can be derived as follows:
\begin{align*}
    \Pr(Y=\{y\}|x)
    &= \sum_{z\in \mathcal{Y}}\Pr\{Y=\{y\}|X=x,Z=z\}\Pr\{Z=z|X=x\}\nonumber\\
    &= \frac{I}{K}\Pr\{Z=y|X=x\}.
\end{align*}
In addition, the probability that the label $\mathcal{Y}\backslash q (\in\mathfrak{B}_{K-I}~(\mathcal{Y}))$ is assigned to the instance $x$ can be derived as follows.
\begin{align*}
    \Pr\{Y=\mathcal{Y}\backslash q|X=x\}
    &= \sum_{z \in \mathcal{Y}}\Pr\{Y=\mathcal{Y}\backslash q|X=x, Z=z\}\Pr\{Z=z|X=x\}\nonumber\\
    &= \frac{1}{\comb{K}{I}}\sum_{z \in \mathcal{Y}\backslash q}\Pr\{Z=z|X=x\}.
\end{align*}

\subsection*{Proof of Corollary \ref{cor:which-one-recv}}
First, the following holds:
\begin{align}
    &\Pr\{\widehat{Y}=\alpha|X=x\}\nonumber\\
    &= \Pr\{\widehat{Y} =\alpha|X=x, \alpha \in Y \in \mathfrak{B}_1(\mathcal{Y})\}\Pr\{\alpha \in Y \in \mathfrak{B}_1(\mathcal{Y})|X=x\} \nonumber \\
    &\hspace{2em}+ \Pr\{\widehat{Y} =\alpha|X=x, \alpha \notin Y \in \mathfrak{B}_1(\mathcal{Y})\}\Pr\{\alpha \notin Y \in \mathfrak{B}_1(\mathcal{Y})|X=x\} \nonumber \\
    &\hspace{2em}+ \Pr\{\widehat{Y} =\alpha|X=x, \alpha \in Y \in \mathfrak{B}_{K-I}(\mathcal{Y})\}\Pr\{\alpha \in Y \in \mathfrak{B}_{K-I}(\mathcal{Y})|X=x\} \nonumber \\
    &\hspace{2em}+ \Pr\{\widehat{Y} =\alpha|X=x, \alpha \notin Y \in \mathfrak{B}_{K-I}(\mathcal{Y})\}\Pr\{\alpha \notin Y \in \mathfrak{B}_{K-I}(\mathcal{Y})|X=x\} \nonumber \\
    &= \Pr\{\widehat{Y} =\alpha|X=x, \alpha \in Y \in \mathfrak{B}_1(\mathcal{Y})\}\Pr\{\alpha \in Y \in \mathfrak{B}_1(\mathcal{Y})|X=x\} \nonumber \\
    &\hspace{2em}+ \Pr\{\widehat{Y} =\alpha|X=x, \alpha \in Y \in \mathfrak{B}_{K-I}(\mathcal{Y})\}\Pr\{\alpha \in Y \in \mathfrak{B}_{K-I}(\mathcal{Y})|X=x\} \nonumber \\
    &= \Pr\{\alpha \in Y \in \mathfrak{B}_1(\mathcal{Y})|X=x\} + \frac{1}{K-I}\Pr\{\alpha \in Y \in \mathfrak{B}_{K-I}(\mathcal{Y})|X=x\}. \label{eq:which-one-recv-1}
\end{align}
In the third equality, we used $\Pr\{\widehat{Y} =\alpha|X=x, \alpha \notin Y \in \mathfrak{B}_I(\mathcal{Y})\} = 0$, $\Pr\{\widehat{Y} =\alpha|X=x, \alpha \notin Y \in \mathfrak{B}_{K-I}(\mathcal{Y})\} = 0$.
In the last equality, we used $\Pr\{\widehat{Y} =\alpha|X=x, \alpha \in Y \in \mathfrak{B}_1(\mathcal{Y})\}=1$, $\Pr\{\widehat{Y} =\alpha|X=x, \alpha \in Y \in \mathfrak{B}_{K-I}(\mathcal{Y})\}=1/(K-I)$.

For the first term on the right-hand side of Eq.\ \eqref{eq:which-one-recv-1}, the following holds.
\begin{align}
    \Pr\{\alpha \in Y \in \mathfrak{B}_1(\mathcal{Y})|X=x\}
    &= \sum_{\tilde{y} \in \mathfrak{B}_1(\mathcal{Y})}\mathbbm{1}(\alpha \in \tilde{y})\widetilde{P}_I(\tilde{y}|x) \nonumber \\
    &= \frac{I}{K}\sum_{\tilde{y} \in \mathfrak{B}_1(\mathcal{Y})}\mathbbm{1}(\alpha \in \tilde{y})\sum_{y \in \tilde{y}}P(y|x) \nonumber \\
    &= \frac{I}{K}P(\alpha|x).
    \label{eq:which-one-recv-2}
\end{align}
In the second equality, we use Theorem \ref{thm:which-one-gen}.
In addition, for the second term on the right-hand side of Eq.\ \eqref{eq:which-one-recv-1}, the following holds:
\begin{align}
    \Pr\{\alpha \in Y \in \mathfrak{B}_{K-I}(\mathcal{Y})|X=x\}
    &= \sum_{\tilde{y} \in \mathfrak{B}_{K-I}(\mathcal{Y})}\mathbbm{1}(\alpha \in \tilde{y})\widetilde{P}_I(\tilde{y}|x) \nonumber \\
    &= \frac{1}{{}_K C_I}\sum_{\tilde{y} \in \mathfrak{B}_{K-I}(\mathcal{Y})}\mathbbm{1}(\alpha \in \tilde{y})\sum_{y \in \tilde{y}}P(y|x) \nonumber \\
    &= \frac{1}{{}_K C_I}\{{}_{K-2} C_{K-I-1} P(\alpha|x) + {}_{K-2} C_{K-I-2}\} \nonumber \\
    &= \frac{I(K-I)}{K(K-1)}P(\alpha|x) + \frac{(K-I)(K-I-1)}{K(K-1)}.
    \label{eq:which-one-recv-3}
\end{align}
In the second equality, we used Theorem \ref{thm:which-one-gen}.
In the third equality, the fact that the following holds for $2 \le J \le K$ is used.
\begin{align}
    \sum_{\tilde{y} \in \mathfrak{B}_{J}(\mathcal{Y})}\mathbbm{1}(\alpha \in \tilde{y})\sum_{y \in \tilde{y}}P(y|x)
    &= {}_{K-1} C_{J-1}P(\alpha|x) + {}_{K-2} C_{J-2}\sum_{y \neq \alpha}P(y|x)\nonumber\\
    &= {}_{K-2} C_{J-1}P(\alpha|x) + {}_{K-2} C_{J-2}.
    \label{eq:which-one-recv-4}
\end{align}

Substituting Eqs.\ \eqref{eq:which-one-recv-2} and \eqref{eq:which-one-recv-3} into Eq.\ \eqref{eq:which-one-recv-1}, Corollary \ref{cor:which-one-recv} holds as follows.
\begin{align*}
    \Pr\{\widehat{Y}=\alpha|X=x\} &= \frac{I}{K}P(\alpha|x) + \frac{I}{K(K-1)}P(\alpha|x) + \frac{(K-I-1)}{K(K-1)} \nonumber \\
    &=\frac{I}{K-1}P(\alpha|x) + \frac{K-I-1}{K-1} \cdot \frac{1}{K}.
\end{align*}

\subsection*{Proof of Theorem \ref{thm:which-one-loss}}
\begin{lemma}\label{lem:which-one-pdf}
The probability that an ordinary label $y$ is assigned to an instance $x$, $P(y|x)$, can be expressed using the probability of a which-one-type $Q\&A$ Label $\tilde{y}$ being assigned for the given instance $x$ as follows:
\begin{align}
    P(y|x) = \frac{K(K-1)}{I(2K-I-1)}
    \sum_{y \in \tilde{y} \in \mathfrak{B}_1(\mathcal{Y}) \cup \mathfrak{B}_{K-I}(\mathcal{Y})} \widetilde{P}_I(\tilde{y}|x) - \frac{(K-I)(K-I-1)}{I(2K-I-1)}. \label{eq:which-one-pdf}
\end{align}
\end{lemma}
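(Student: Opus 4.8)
The plan is to start from Theorem \ref{thm:which-one-gen}, which already gives $\widetilde{P}_I(\tilde{y}|x)$ explicitly on the two kinds of admissible which-one-type labels, and simply invert this relation for a fixed class $y$. First I would split the sum on the right-hand side of Eq.\ \eqref{eq:which-one-pdf} according to the size of $\tilde{y}$: the labels $\tilde{y}\in\mathfrak{B}_1(\mathcal{Y})$ containing $y$, and the labels $\tilde{y}\in\mathfrak{B}_{K-I}(\mathcal{Y})$ containing $y$. For the singleton part only $\tilde{y}=\{y\}$ contains $y$, so by Theorem \ref{thm:which-one-gen} this contributes exactly $\tfrac{I}{K}P(y|x)$.

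For the size-$(K-I)$ part I would substitute $\widetilde{P}_I(\tilde{y}|x)=\tfrac{1}{\comb{K}{I}}\sum_{y'\in\tilde{y}}P(y'|x)$ and then reuse the combinatorial identity already established in the proof of Corollary \ref{cor:which-one-recv}, namely Eq.\ \eqref{eq:which-one-recv-4} with $J=K-I$ and $\alpha=y$. Because $\sum_{y'\neq y}P(y'|x)=1-P(y|x)$, this collapses the double sum to $\comb{K-2}{K-I-1}P(y|x)+\comb{K-2}{K-I-2}$, so the size-$(K-I)$ contribution is $\tfrac{1}{\comb{K}{I}}\!\left[\comb{K-2}{K-I-1}P(y|x)+\comb{K-2}{K-I-2}\right]$.

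Then I would simplify the two binomial-coefficient ratios $\comb{K-2}{K-I-1}/\comb{K}{I}$ and $\comb{K-2}{K-I-2}/\comb{K}{I}$ to the closed forms $\tfrac{I(K-I)}{K(K-1)}$ and $\tfrac{(K-I)(K-I-1)}{K(K-1)}$ respectively. Adding the singleton contribution $\tfrac{I}{K}P(y|x)$, the total coefficient of $P(y|x)$ merges via $\tfrac{I}{K}+\tfrac{I(K-I)}{K(K-1)}=\tfrac{I(2K-I-1)}{K(K-1)}$, while the constant term is $\tfrac{(K-I)(K-I-1)}{K(K-1)}$. Denoting the full sum by $S$, this reads $S=\tfrac{I(2K-I-1)}{K(K-1)}P(y|x)+\tfrac{(K-I)(K-I-1)}{K(K-1)}$, and solving this linear relation for $P(y|x)$ yields exactly Eq.\ \eqref{eq:which-one-pdf}.

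The computation is essentially mechanical; the only place demanding care is the factorial bookkeeping in the two ratio simplifications and the algebraic merge that produces the denominator $I(2K-I-1)$ appearing throughout the rest of the which-one-type analysis. I would also verify applicability of Eq.\ \eqref{eq:which-one-recv-4}, i.e.\ that $K-I\geq 2$, so that $\mathfrak{B}_{K-I}(\mathcal{Y})$ genuinely is a second, distinct support as claimed by Theorem \ref{thm:which-one-gen}; the boundary case $I=K-1$ (where the size-$(K-I)$ labels degenerate to singletons) should be flagged as requiring the separate, trivial treatment already noted in the main text via the reduction to ordinary labeling.
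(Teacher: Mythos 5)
Your proof is correct and follows essentially the same route as the paper's: split the sum over labels containing $y$ into the singleton and size-$(K-I)$ parts, substitute the generative model of Theorem \ref{thm:which-one-gen}, apply the combinatorial identity of Eq.\ \eqref{eq:which-one-recv-4}, and solve the resulting linear relation for $P(y|x)$. Your additional cautions --- requiring $K-I\geq 2$ for Eq.\ \eqref{eq:which-one-recv-4} to apply and flagging $I=K-1$ (where $\mathfrak{B}_{K-I}(\mathcal{Y})$ collapses onto $\mathfrak{B}_1(\mathcal{Y})$) for separate trivial treatment --- address an edge case that the paper's proof passes over silently.
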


\begin{proof}
First, the following holds.
\begin{align}
    &\Pr\{y \in \widetilde{Y} \in \mathfrak{B}_1(\mathcal{Y}) \cup \mathfrak{B}_{K-I}(\mathcal{Y})|X=x\}\nonumber\\
    &\hspace{3em}= \sum_{y \in \tilde{y} \in \mathfrak{B}_1(\mathcal{Y}) \cup \mathfrak{B}_{K-I}(\mathcal{Y})} \widetilde{P}_I(\tilde{y}|x)
    = \sum_{y \in \tilde{y} \in \mathfrak{B}_1(\mathcal{Y})}\tilde{P}_I(\tilde{y}|x) + \sum_{y \in \tilde{y} \in \mathfrak{B}_1(\mathcal{Y})} \tilde{P}_I(\tilde{y}|x).
    \label{eq:which-one-pdf-proof}
\end{align}

According to Theorem \ref{thm:which-one-gen}, for the first term on the right-hand side of Eq.\ \eqref{eq:which-one-pdf-proof}, the following holds:
\begin{align*}
    \sum_{y \in \tilde{y} \in \mathfrak{B}_1(\mathcal{Y})}\tilde{P}_I(\tilde{y}|x)
    = \tilde{P}_I(\{y\}|x) 
    = \frac{I}{K}P(y|x).
\end{align*}

Similarly, according to Theorem \ref{thm:which-one-gen}, for the second term on the right-hand side of Eq.\ \eqref{eq:which-one-pdf-proof}, the following holds:
\begin{align*}
    \sum_{y \in \tilde{y} \in \mathfrak{B}_{K-I}(\mathcal{Y})}\widetilde{P}_I(\tilde{y}|x)
    &= \frac{1}{\comb{K}{I}}\sum_{y \in \tilde{y} \in \mathfrak{B}_{K-I}(\mathcal{Y})}\sum_{y' \in \tilde{y}}P(y'|x) \\
    &= \frac{1}{\comb{K}{I}}\sum_{\tilde{y} \in \mathfrak{B}_{K-I}}\mathbbm{1}(y \in \tilde{y})\sum_{y' \in \tilde{y}}P(y'|x)\\
    &= \frac{1}{\comb{K}{I}}\{\comb{K-2}{K-I-2}P(y|x) + \comb{K-2}{K-I-2}\} \\
    &= \frac{I(K-I)}{K(K-1)}P(y|x) + \frac{(K-I)(K-I-1)}{K(K-1)}.
\end{align*}
For the third equality, we used Eq.\ \eqref{eq:which-one-recv-4}.

From the above, the following holds
\begin{align*}
    \sum_{y \in \tilde{y} \in \mathfrak{B}_1(\mathcal{Y}) \cup \mathfrak{B}_{K-I}(\mathcal{Y})} \widetilde{P}_I(\tilde{y}|x)
    &= \frac{I}{K}P(y|x) + \frac{I(K-I)}{K(K-1)}P(y|x) + \frac{(K-I)(K-I-1)}{K(K-1)} \\
    &= \frac{I(2K-I-1)}{K(K-1)}P(y|x) + \frac{(K-I)(K-I-1)}{K(K-1)}.
\end{align*}
By rearranging this result, Eq.\ \eqref{eq:which-one-pdf} is derived.
\end{proof}

For the classification risk defined in Eq.\ \eqref{eq:pred_err}, the following holds:
\begin{align}
    R(f) = \mathbb{E}_{P(X)}\Bigl[\mathbb{E}_{P(Y|X)}\Bigl[\mathcal{L}(f(x), y)\Bigl]\Bigl]. \label{eq:which-one_expected}
\end{align}
Using Lemma \ref{lem:which-one-pdf}, the following holds:
\begin{align*}
    &\mathbb{E}_{P(Y|x)}[\mathcal{L}(f(x), Y)]\\
    &= \sum_{y \in \mathcal{Y}}\mathcal{L}(f(x), y)P(y|x)\\
    &= \sum_{y \in \mathcal{Y}}\mathcal{L}(f(x), y)\left\{\frac{K(K-1)}{I(2K-I-1)}
    \sum_{y \in \tilde{y} \in \mathfrak{B}_1(\mathcal{Y}) \cup \mathfrak{B}_{K-I}(\mathcal{Y})} \widetilde{P}_I(\tilde{y}|x) - \frac{(K-I)(K-I-1)}{I(2K-I-1)}\right\}\\
    &= \frac{K(K-1)}{I(2K-I-1)}\sum_{y \in \mathcal{Y}}\sum_{y\in\tilde{y}\in \mathfrak{B}_1(\mathcal{Y}) \cup \mathfrak{B}_{K-I}(\mathcal{Y})}\mathcal{L}(f(x), y)\widetilde{P}_I(\tilde{y}|x) - \frac{(K-I)(K-I-1)}{I(2K-I-1)}\sum_{y \in \mathcal{Y}}\mathcal{L}(f(x), y)\\
    &= \frac{K(K-1)}{I(2K-I-1)}\sum_{\tilde{y}\in \mathfrak{B}_1(\mathcal{Y}) \cup \mathfrak{B}_{K-I}(\mathcal{Y})}\sum_{y \in \tilde{y}}\mathcal{L}(f(x), y)\widetilde{P}_I(\tilde{y}|x) - \frac{(K-I)(K-I-1)}{I(2K-I-1)}\sum_{y \in \mathcal{Y}}\mathcal{L}(f(x), y)\\
    &= \frac{K(K-1)}{I(2K-I-1)}\sum_{\tilde{y}\in \mathfrak{B}_1(\mathcal{Y}) \cup \mathfrak{B}_{K-I}(\mathcal{Y})}\widetilde{P}_I(\tilde{y}|x)\sum_{y \in \tilde{y}}\mathcal{L}(f(x), y) - \frac{(K-I)(K-I-1)}{I(2K-I-1)}\sum_{y \in \mathcal{Y}}\mathcal{L}(f(x), y)\\
    &= \mathbb{E}_{\widetilde{P}_I(\widetilde{Y}|x)}\Biggl[\frac{K(K-1)}{I(2K-I-1)}\sum_{y \in \widetilde{Y}}\mathcal{L}(f(x), y) - \frac{(K-I)(K-I-1)}{I(2K-I-1)}\sum_{y \in \mathcal{Y}}\mathcal{L}(f(x), y)\Biggr]\\
    &= \mathbb{E}_{\widetilde{P}_I(\widetilde{Y}|x)}\Biggl[\frac{K(K-1)}{I(2K-I-1)}\sum_{y \in \widetilde{Y}}\mathcal{L}(f(x), y) \nonumber\\
    &\hspace{5em} - \frac{(K-I)(K-I-1)}{I(2K-I-1)}\sum_{y \in \widetilde{Y}}\mathcal{L}(f(x), y) - \frac{(K-I)(K-I-1)}{I(2K-I-1)}\sum_{y \notin \widetilde{Y}}\mathcal{L}(f(x), y)\Biggr]\\
    &= \mathbb{E}_{\widetilde{P}_I(\widetilde{Y}|x)}\left[\sum_{y \in \widetilde{Y}}\mathcal{L}(f(x), y) - \frac{(K-I)(K-I-1)}{I(2K-I-1)}\sum_{y \notin \widetilde{Y}}\mathcal{L}(f(x), y)\right].
\end{align*}
Therefore, 
\begin{align*}
    \widetilde{\mathcal{L}}_{I}(f(X), \widetilde{Y}) &=
       \sum_{y \in \widetilde{Y}}\mathcal{L}(f(x), y) - \frac{(K-I)(K-I-1)}{I(2K-I-1)}\sum_{y \notin \widetilde{Y}}\mathcal{L}(f(x), y).
\end{align*}
Subsequently, by substituting this into Eq.\ \eqref{eq:which-one_expected}, the following holds:
\begin{align*}
    R(f) = \mathbb{E}_{\widetilde{P}_I(X, \widetilde{Y})}\Bigl[\widetilde{\mathcal{L}}_{I}(f(X), \widetilde{Y})\Bigl].
\end{align*}

\subsection*{Proof of Lemma \ref{lem:which_one_rade_loss}}
If $\alpha_{i} = 2\mathbbm{1}(y \in \tilde{y}_{i})-1$, for the Rademacher complexity $\mathfrak{R}_n(\widetilde{\mathcal{H}})$ of the function set $\widetilde{\mathcal{H}}$, the following holds:
\begin{align}
    &\mathfrak{R}_n(\widetilde{\mathcal{H}})\nonumber\\
    &= \mathbb{E}_{\widetilde{P}_I(X, \widetilde{Y})}\mathbb{E}_{\sigma}\Biggl[\sup_{h\in\widetilde{\mathcal{H}}}\frac{1}{n}\sum_{i=1}^n\sigma_i h(x_i, \tilde{y}_i)\Biggl] \nonumber \\
    &= \mathbb{E}_{\widetilde{P}_I(X, \widetilde{Y})}\mathbb{E}_{\sigma}\Biggl[\sup_{f\in\mathcal{F}}\frac{1}{n}\sum_{i=1}^n \sigma_i \Biggl\{\sum_{y\in \tilde{y}_i}\mathcal{L}(f(x_i), y) - \frac{(K-I)(K-I-1)}{I(2K-I-1)}\sum_{y \notin \tilde{y}_i}\mathcal{L}(f(x_i), y)\Biggl\}\Biggl] \nonumber \\
    &\le \mathbb{E}_{\widetilde{P}_I(X, \widetilde{Y})}\mathbb{E}_{\sigma}\Biggl[\sup_{f\in\mathcal{F}}\frac{1}{n}\sum_{i=1}^n \sigma_i \sum_{y\in \tilde{y}_i}\mathcal{L}(f(x_i), y)\Biggr] \nonumber \\ 
    &\hspace{1em}+\mathbb{E}_{\widetilde{P}_I(X, \widetilde{Y})}\mathbb{E}_{\sigma}\Biggl[\sup_{f\in \mathcal{F}} \frac{1}{n}\sum_{i=1}^n\sigma_i\frac{(K-I)(K-I-1)}{I(2K-I-1)} \sum_{y\notin \tilde{y}_i}\mathcal{L}(f(x_i), y)\Biggr] \nonumber \\
    &= \mathbb{E}_{\widetilde{P}_I(X, \widetilde{Y})}\mathbb{E}_{\sigma}\Biggl[\sup_{f\in\mathcal{F}}\frac{1}{2n}\sum_{i=1}^n \sigma_i \sum_{y\in \mathcal{Y}}\mathcal{L}(f(x_i), y)(\alpha_i + 1)\Biggr] \nonumber \\ 
    &\hspace{1em}+\mathbb{E}_{\widetilde{P}_I(X, \widetilde{Y})}\mathbb{E}_{\sigma}\Biggl[\sup_{f\in \mathcal{F}} \frac{1}{2n}\sum_{i=1}^n\sigma_i\frac{(K-I)(K-I-1)}{I(2K-I-1)}\sum_{y \in \mathcal{Y}}\mathcal{L}(f(x_i), y)(1-\alpha_i)\Biggr] \nonumber \\
    &= \mathbb{E}_{\widetilde{P}_I(X, \widetilde{Y})}\mathbb{E}_{\sigma}\Biggl[\sup_{f\in\mathcal{F}}\frac{1}{2n}\sum_{i=1}^n \biggl(\sigma_i \sum_{y\in \mathcal{Y}}\alpha_i\mathcal{L}(f(x_i), y) + \sigma_i \sum_{y\in \mathcal{Y}}\mathcal{L}(f(x_i), y)
    \biggl)\Biggr] \nonumber \\ 
    &\hspace{1em}+\frac{(K-I)(K-I-1)}{I(2K-I-1)}\mathbb{E}_{\widetilde{P}_I(X, \widetilde{Y})}\mathbb{E}_{\sigma}\Biggl[\sup_{f\in \mathcal{F}} \frac{1}{2n}\sum_{i=1}^n\biggl(\sigma_i\sum_{y \in \mathcal{Y}}\mathcal{L}(f(x_i), y) - \sigma_i\sum_{y \in \mathcal{Y}}\alpha_i\mathcal{L}(f(x_i), y)\biggl)\Biggr] \nonumber \\
    &\le \mathbb{E}_{\widetilde{P}_I(X, \widetilde{Y})}\mathbb{E}_{\sigma}\Biggl[\sup_{f\in\mathcal{F}}\frac{1}{n}\sum_{i=1}^n \sigma_i \sum_{y\in \mathcal{Y}}\mathcal{L}(f(x_i), y)
    \Biggr] \nonumber \\
    &\hspace{1em}+\frac{(K-I)(K-I-1)}{I(2K-I-1)}\mathbb{E}_{\widetilde{P}_I(X, \widetilde{Y})}\mathbb{E}_{\sigma}\Biggl[\sup_{f\in \mathcal{F}} \frac{1}{n}\sum_{i=1}^n\sigma_i\sum_{y \in \mathcal{Y}}\mathcal{L}(f(x_i), y)\Biggr] \nonumber \\
    &= \frac{K(K-1)}{I(2K-I-1)}\mathbb{E}_{\widetilde{P}_I(X, \widetilde{Y})}\mathbb{E}_{\sigma}\Biggl[\sup_{f\in\mathcal{F}}\frac{1}{n}\sum_{i=1}^n \sigma_i\sum_{y\in \mathcal{Y}}\mathcal{L}(f(x_i), y)
    \Biggr] \nonumber \\
    &\le \frac{K(K-1)}{I(2K-I-1)}\sum_{y\in \mathcal{Y}}\mathbb{E}_{\widetilde{P}_I(X, \widetilde{Y})}\mathbb{E}_{\sigma}\Biggl[\sup_{f\in\mathcal{F}}\frac{1}{n}\sum_{i=1}^n \sigma_i\mathcal{L}(f(x_i), y)
    \Biggr] \nonumber \\
    &= \frac{K(K-1)}{I(2K-I-1)}\sum_{y\in \mathcal{Y}} \mathfrak{R}_n(\mathcal{L}\circ \mathcal{F}(\cdot,y)) \nonumber 
    \\
    &\le \frac{\sqrt{2}\rho K^2(K-1)}{I(2K-I-1)}\sum_{y\in \mathcal{Y}} \mathfrak{R}_n(\mathcal{G}_y), \label{eq:rade_H-w-o}
\end{align}
where we applied the Rademacher vector contraction inequality \citep{10.1007/978-3-319-46379-7_1} to the last inequality. 
\subsection*{Proof of Theorem \ref{thm:which_one_error}}
First, the following holds:
\begin{align}
    &\sup_{f \in \mathcal{F}}\Biggl\{\widetilde{\mathcal{L}}_{I}(f(x), \tilde{y}) - \widetilde{\mathcal{L}}_{I}(f(x'), \tilde{y}')\Biggr\} \nonumber \\
    &\hspace{3em}= \sup_{f \in \mathcal{F}}\Biggl[\Biggl\{\sum_{y\in\tilde{y}}\mathcal{L}(f(x), y) - \frac{(K-I)(K-I-1)}{I(2K-I-1)}\sum_{y\notin \tilde{y}}\mathcal{L}(f(x), y) \Biggr\} \nonumber \\
    &\hspace{6em}- \Biggl\{\sum_{y\in\tilde{y}'}\mathcal{L}(f(x'), y) - \frac{(K-I)(K-I-1)}{I(2K-I-1)}\sum_{y\notin \tilde{y}'}\mathcal{L}(f(x'), y) \Biggr\}\Biggr] \nonumber \\
    &\hspace{3em}= \sup_{f \in \mathcal{F}}\Biggl[\Biggl\{\sum_{y\in\tilde{y}}\mathcal{L}(f(x), y) - \sum_{y\in\tilde{y}'}\mathcal{L}(f(x'), y) \Biggr\} \nonumber \\
    &\hspace{6em}- \frac{(K-I)(K-I-1)}{I(2K-I-1)}\Biggl\{\sum_{y\notin \tilde{y}}\mathcal{L}(f(x), y)  - \sum_{y\notin \tilde{y}'}\mathcal{L}(f(x'), y) \Biggr\}\Biggr] \nonumber \\
    &\hspace{3em}\le \left\{|\tilde{y}|+\frac{(K-|\tilde{y}'|)(K-I)(K-I-1)}{I(2K-I-1)}\right\}C_{L} \nonumber \\
    &\hspace{3em}\le \left\{(K-I) + \frac{(K-1)(K-I)(K-I-1)}{I(2K-I-1)}\right\}C_{L} \nonumber\\
    &\hspace{3em}= \left\{\frac{(K-I)(K^2+(I-2)K-I^2+1)}{I(2K-I-1)}\right\}C_{L}.
    \label{eq:sup_L-L}
\end{align}
Next, let us define function $A((x_1, \tilde{y}_1), (x_2, \tilde{y}_2),\ldots, (x_n, \tilde{y}_n))$ as follows:
\begin{align*}
    &A((x_1, \tilde{y}_1), (x_2, \tilde{y}_2),\ldots, (x_n, \tilde{y}_n))\\
    &= \sup_{f \in \mathcal{F}}\biggl\{\hat{R}(f) - R(f)\biggl\} \\
    &= \sup_{f \in \mathcal{F}}\biggl\{\frac{1}{n}\sum_{i=1}^n \widetilde{\mathcal{L}}_{I}(f(x_i), \tilde{y}_i) - \mathbb{E}_{\widetilde{P}_I(X, \widetilde{Y})}\biggl[\widetilde{\mathcal{L}}_{I}(f(X), \widetilde{Y})\biggl]\biggl\}. 
\end{align*}
Then, the following inequality holds:
\begin{align*}
    &A((x_1, \tilde{y}_1), (x_2, \tilde{y}_2),..., (x_n, \tilde{y}_n)) - A((x_1, \tilde{y}_1), (x_2, \tilde{y}_2),..., (x_{n-1}, \tilde{y}_{n-1}), (x', \tilde{y}')) \nonumber \\
    &= \sup_{f \in \mathcal{F}}\inf_{f' \in \mathcal{F}}
    \biggl\{\frac{1}{n}\sum_{i=1}^n\widetilde{\mathcal{L}}_{I}(f(x_i), \tilde{y}_i) - \mathbb{E}_{P_I(X, \widetilde{Y})}\biggl[\widetilde{\mathcal{L}}_{I}(f(X), \widetilde{Y})\biggl] \nonumber \\
    &\hspace{6em}- \biggl(\frac{1}{n}\sum_{i=1}^{n-1}\widetilde{\mathcal{L}}_{I}(f'(x_i), \tilde{y}_i) + \frac{1}{n}\widetilde{\mathcal{L}}_{I}(f'(x'), \tilde{y}') - \mathbb{E}_{P_I(X, \widetilde{Y})}\biggl[\widetilde{\mathcal{L}}_{I}(f'(X), \widetilde{Y})\biggl]\biggr)\biggl\} \nonumber \\
    &\le \sup_{f \in \mathcal{F}}
    \biggl\{\frac{1}{n}\sum_{i=1}^n\widetilde{\mathcal{L}}_{I}(f(x_i), \tilde{y}_i) - \mathbb{E}_{P_I(X, \widetilde{Y})}\biggl[\widetilde{\mathcal{L}}_{I}(f(X), \widetilde{Y})\biggl] \nonumber \\
    &\hspace{6em}- \biggl(\frac{1}{n}\sum_{i=1}^{n-1}\widetilde{\mathcal{L}}_{I}(f(x_i), \tilde{y}_i) + \frac{1}{n}\widetilde{\mathcal{L}}_{I}(f(x'), \tilde{y}') - \mathbb{E}_{P_I(X, \widetilde{Y})}\biggl[\widetilde{\mathcal{L}}_{I}(f(X), \widetilde{Y})\biggl]\biggr)\biggl\} \nonumber \\
    &= \frac{1}{n}\sup_{f \in \mathcal{F}}\biggl\{\widetilde{\mathcal{L}}_{I}(f(x_n), \tilde{y}_n) - \widetilde{\mathcal{L}}_{I}(f(x'), \tilde{y}')\biggl\} \nonumber \\
    &\le \biggl\{\frac{(K-I)(K^2+(I-2)K-I^2+1)}{nI(2K-I-1)}\biggl\}C_{L}.
\end{align*}
For the last equality, we used Eq.\ \eqref{eq:sup_L-L}.
Similarly, the following holds:
\begin{align*}
    &A((x_1, \tilde{y}_1), (x_2, \tilde{y}_2),..., (x_{n-1}, \tilde{y}_{n-1}), (x', \tilde{y}')) - A((x_1, \tilde{y}_1), (x_2, \tilde{y}_2),..., (x_n, \tilde{y}_n))\\
    &\le \biggl\{\frac{(K-I)(K^2+(I-2)K-I^2+1)}{nI(2K-I-1)}\biggl\}C_{L}.
\end{align*}
Summarizing the above two inequalities, the following holds.
\begin{align*}
    &|A((x_1, \tilde{y}_1), (x_2, \tilde{y}_2),..., (x_n, \tilde{y}_n)) - A((x_1, \tilde{y}_1), (x_2, \tilde{y}_2),..., (x_{n-1}, \tilde{y}_{n-1}), (x', \tilde{y}'))|\\
    &\le \biggl\{\frac{(K-I)(K^2+(I-2)K-I^2+1)}{nI(2K-I-1)}\biggl\}C_{L}.
\end{align*}
Therefore, from McDiarmid's inequality, the following holds for the probability larger than $1-\delta$.
\begin{align*}
    &\sup_{f \in \mathcal{F}}\biggl\{\hat{R}(f)-R(f)\biggr\} - \mathbb{E}\biggl[\sup_{f \in \mathcal{F}}\biggl\{\hat{R}(f)-R(f)\biggr\}\biggr]\nonumber\\
    &\le \frac{(K-I)(K^2+(I-2)K-I^2+1)}{2I(2K-I-1)}C_{L}\sqrt{\frac{\log{(1/\delta})}{2n}}.
\end{align*}
Let $\mathcal{S}:=\left\{(x_{i}, \tilde{y}_{i})\right\}_{i=1}^{n}$ and  $\mathcal{S}':=\left\{(x'_{i}, \tilde{y}'_{i})\right\}_{i=1}^{n}$ be a dataset where each dataset is drawn from the generative model $\widetilde{P}_I(X, \widetilde{Y})$.
Then, because $R(f)=\mathbb{E}[\hat{R}(f)]$, the following holds:
\begin{align}
    &\mathbb{E}\left[\sup_{f \in \mathcal{F}}\left(\hat{R}(f)-R(f)\right)\right]\nonumber\\
    &=\mathbb{E}_{\mathcal{S}}\left[\sup_{f \in \mathcal{F}}\left(
    \frac{1}{n}\sum_{i=1}^{n}\widetilde{\mathcal{L}}_{I}(f(x_i), \tilde{y}_i)
    -\mathbb{E}_{\mathcal{S}'}\left[\frac{1}{n}\sum_{i=1}^{n}\widetilde{\mathcal{L}}_{I}(f(x'_i), \tilde{y}'_i)\right]\right)\right]\nonumber\\
    &\le\mathbb{E}_{\mathcal{S}}\mathbb{E}_{\mathcal{S}'}\left[\sup_{f \in \mathcal{F}}\left(
    \frac{1}{n}\sum_{i=1}^{n}\widetilde{\mathcal{L}}_{I}(f(x_i), \tilde{y}_i)
    -\frac{1}{n}\sum_{i=1}^{n}\widetilde{\mathcal{L}}_{I}(f(x'_i), \tilde{y}'_i)\right)\right]\nonumber\\
    &=\mathbb{E}_{\mathcal{S}}\mathbb{E}_{\mathcal{S}'}\left[\sup_{f \in \mathcal{F}}\left(
    \frac{1}{n}\sum_{i=1}^{n}\widetilde{\mathcal{L}}_{I}(f(x_i), \tilde{y}_i)
    -\frac{1}{n}\sum_{i=1}^{n}\widetilde{\mathcal{L}}_{I}(f(x'_i), \tilde{y}'_i)\right)\right]\nonumber\\
    &=\mathbb{E}_{\mathcal{S}}\mathbb{E}_{\mathcal{S}'}\left[\sup_{f \in \mathcal{F}}\frac{1}{n}\sum_{i=1}^{n}\left(
    \widetilde{\mathcal{L}}_{I}(f(x_i), \tilde{y}_i)
    -\widetilde{\mathcal{L}}_{I}(f(x'_i), \tilde{y}'_i)\right)\right]\nonumber\\
    &=\mathbb{E}_{\mathcal{S}}\mathbb{E}_{\mathcal{S}'}\mathbb{E}_{\sigma}\left[\sup_{f \in \mathcal{F}}\frac{1}{n}\sum_{i=1}^{n}\sigma_{i}\left(
    \widetilde{\mathcal{L}}_{I}(f(x_i), \tilde{y}_i)
    -\widetilde{\mathcal{L}}_{I}(f(x'_i), \tilde{y}'_i)\right)\right]\label{eq:EsupR-R_wo_1}\\
    &\le\mathbb{E}_{\mathcal{S}}\mathbb{E}_{\sigma}\left[\sup_{f \in \mathcal{F}}\frac{1}{n}\sum_{i=1}^{n}\sigma_{i}\widetilde{\mathcal{L}}_{I}(f(x_i), \tilde{y}_i)\right]
    +\mathbb{E}_{\mathcal{S}'}\mathbb{E}_{\sigma}\left[\sup_{f \in \mathcal{F}}\frac{1}{n}\sum_{i=1}^{n}(-\sigma_{i})\widetilde{\mathcal{L}}_{I}(f(x'_i), \tilde{y}'_i)\right]\nonumber\\
    &=2\mathfrak{R}_n(\widetilde{\mathcal{H}})\nonumber\\
    &\le \frac{2\sqrt{2}\rho K^2(K-1)}{I(2K-I-1)}\sum_{y\in \mathcal{Y}} \mathfrak{R}_n(\mathcal{G}_y).\label{eq:EsupR-R_wo_2}
\end{align}
In Eq.\ \eqref{eq:EsupR-R_wo_1}, we used the fact that $\widetilde{\mathcal{L}}_{I}(f(x_i), \tilde{y}_i)-\widetilde{\mathcal{L}}_{I}(f(x'_i), \tilde{y}'_i)$ and $\sigma_i(\widetilde{\mathcal{L}}_{I}(f(x_i), \tilde{y}_i)-\widetilde{\mathcal{L}}_{I}(f(x'_i), \tilde{y}'_i))$ follow the same distribution.
For Eq.\ \eqref{eq:EsupR-R_wo_2}, we used Eq.\ \eqref{eq:rade_H-w-o}.
Thus, the following holds for the classification error $\mathcal{E}_{I}$.
\begin{align*}
    \mathcal{E}_{I} &:= R(\hat{f})-R(f^*) \\
    &= (\hat{R}(\hat{f})-\hat{R}(f^*)) + (R(\hat{f})-\hat{R}(\hat{f})) + (\hat{R}(f^*)-R(f^*)) \\
    &\le 2\sup_{f \in \mathcal{F}}\mid\hat{R}(f)-R(f)\mid \\
    &\le \frac{4\sqrt{2}\rho K^2(K-1)}{I(2K-I-1)}\sum_{y\in \mathcal{Y}} \mathfrak{R}_n(\mathcal{G}_y) + \frac{(K-I)(K^2+(I-2)K-I^2+1)}{I(2K-I-1)}C_{L}\sqrt{\frac{2\log{(2/\delta})}{n}}.
\end{align*}
\section*{Appendix B: Is-in-type Q$\&$A Labeling}\label{sec:calculate2}
\subsection*{Proof of Theorem \ref{thm:is-in-gen}}
\begin{lemma}\label{lem:is-in-y-given-x-and-z}
When the class assumed by the annotator for the instance $x$ is $z$, the
generative model of the is-in-type Q$\&$A label for that instance is given as follows:
\begin{align}
\Pr\{Y=q|X=x, Z=z\} &=
\begin{cases}
\displaystyle\frac{1}{{}_K C_I}, & \text{if $z \in q$} \\
0, & \text{if $z \not\in q$} \\
\end{cases},
\quad \forall q \in \mathfrak{B}_{I}(\mathcal{Y}),
\label{eq:is-in-gen-1}
\\
\Pr\{Y=\mathcal{Y}\backslash q|X=x, Z=z\} &=
\begin{cases}
0, & \text{if $z \in q$} \\
\displaystyle\frac{1}{{}_K C_I}, & \text{if $z \not\in q$} \\
\end{cases},
\quad \forall q \in \mathfrak{B}_{I}(\mathcal{Y}).
\label{eq:is-in-gen-2}
\end{align}
\end{lemma}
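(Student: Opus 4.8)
The plan is to reduce the whole statement to one elementary fact about the model: the question generator draws the question set $Q$ uniformly from $\mathfrak{B}_I(\mathcal{Y})$ in Step~1, independently of the instance $X$ and of the class $Z$ that the annotator infers in Step~3. Consequently $\Pr\{Q=q'\mid X=x,Z=z\}=1/{}_K C_I$ for every $q'\in\mathfrak{B}_I(\mathcal{Y})$. I would then condition on the realized question set, exactly as in the proof of Lemma~\ref{lem:which-one-y-given-x-and-z}, and expand
\begin{align*}
\Pr\{Y=\cdot\mid X=x,Z=z\}=\sum_{q'\in\mathfrak{B}_I(\mathcal{Y})}\Pr\{Y=\cdot\mid X=x,Z=z,Q=q'\}\,\Pr\{Q=q'\},
\end{align*}
using that, once $Q=q'$ is fixed, the is-in rule makes the label deterministic: it equals $q'$ when $z\in q'$ (the ``yes'' answer) and equals $\mathcal{Y}\setminus q'$ when $z\notin q'$ (the ``no'' answer).

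For Eq.~\eqref{eq:is-in-gen-1} I fix a target $q\in\mathfrak{B}_I(\mathcal{Y})$ and ask which terms of the sum can produce the size-$I$ label $q$. A ``yes'' answer yields a label of size $I$, so it contributes only through the single term $q'=q$, and only when $z\in q$; a ``no'' answer yields a label of size $K-I\neq I$, so it contributes nothing. The sum therefore collapses to $\Pr\{Q=q\}=1/{}_K C_I$ when $z\in q$ and to $0$ when $z\notin q$. For Eq.~\eqref{eq:is-in-gen-2} I fix the target $\mathcal{Y}\setminus q$ of size $K-I$ and run the symmetric argument: only a ``no'' answer can return a set of this size, it does so exactly when $Q=q$ and $z\notin q$, and hence the probability is $1/{}_K C_I$ when $z\notin q$ and $0$ otherwise. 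This matches the two cases displayed in each equation.

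The one place that needs genuine care—and the step I expect to be the main obstacle—is the boundary case $I=K-I$, i.e.\ $K=2I$, where a ``no'' answer also returns a set of size $I$, so the two answer branches are no longer separated by cardinality. In that regime the target $q$ can additionally be produced by $Q=\mathcal{Y}\setminus q$ together with a ``no'' answer (which requires $z\in q$), so a naive reading of the two equations would double-count. I would resolve this by treating the ``yes''- and ``no''-generated labels as formally distinct label events tagged by the answer, which is exactly how they enter the downstream derivation of Theorem~\ref{thm:is-in-gen}; under that convention each displayed equation records the contribution of its own branch and the stated value $1/{}_K C_I$ is correct. I would state this bookkeeping explicitly so that combining the two branches in the proof of Theorem~\ref{thm:is-in-gen} reproduces the claimed generative model without ambiguity.
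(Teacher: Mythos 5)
Your proof is correct and follows essentially the same route as the paper's: condition on the realized question set, use that $Q$ is drawn uniformly from $\mathfrak{B}_I(\mathcal{Y})$ independently of $(X,Z)$, and observe that given $Q$ and $Z$ the is-in rule determines the label, so that for \eqref{eq:is-in-gen-1} only the term $Q=q$ with a ``yes'' answer survives and for \eqref{eq:is-in-gen-2} only the term $Q=q$ with a ``no'' answer survives; the paper writes this collapse directly as $\Pr\{Y=q|X=x,Z=z\}=\Pr\{z\in q|X=x,Z=z,Q=q\}\Pr\{Q=q\}$ without displaying the sum, but the logic is identical. Your extra care about the boundary case $K=2I$ is a genuine improvement rather than a digression: there the set $q$ can also be produced by $Q=\mathcal{Y}\setminus q$ together with a ``no'' answer, the paper's proof silently ignores this second contribution, and without your convention of tagging labels by the answer branch (equivalently, taking the label space to be the disjoint union of $\mathfrak{B}_I(\mathcal{Y})$ and $\mathfrak{B}_{K-I}(\mathcal{Y})$ rather than their set-theoretic union) the formula of Theorem \ref{thm:is-in-gen} would not even sum to one at $I=K/2$ --- a case the paper's experiments actually use ($K=10$, $I=5$). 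The tagged-label reading is also the one implicitly adopted downstream, for instance in the split of the sum over $\mathfrak{B}_I(\mathcal{Y})\cup\mathfrak{B}_{K-I}(\mathcal{Y})$ into two separate sums in the proof of Lemma \ref{lem:is-in-pdf}, so your bookkeeping makes explicit an assumption the paper leaves unstated.
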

\begin{proof}
First, we derived Eq.\ \eqref{eq:is-in-gen-1}.
This is trivial for $z \not\in q$.
For $z \in q$, the following holds:
\begin{align*}
    \Pr\bigl\{Y=q|X=x, Z=z\bigl\} = \Pr\bigl\{z \in q|X=x, Z=z, Q=q\bigl\}\Pr\bigl\{Q=q\bigl\} = \frac{1}{\comb{K}{I}},
\end{align*}
where we used that $\Pr\bigl\{z \in q|X=x, Z=z, Q=q\bigl\}=1$ when $z \in q$.

Next, we derive Eq.\ \eqref{eq:is-in-gen-2}.
This is trivial for $z \in q$.
For $z \not\in q$, the following holds:
\begin{align*}
    \Pr\bigl\{Y=\mathcal{Y}\backslash q|X=x, Z=z\bigl\} &= \Pr\bigl\{z \not\in q|X=x, Z=z, Q=q\bigl\}\Pr\bigl\{Q=q\bigl\} = \frac{1}{\comb{K}{I}},
\end{align*}
where we used that $\Pr\bigl\{z \not\in q|X=x, Z=z, Q=q\bigl\}=1$ when $z \not\in q$.
\end{proof}

Using \ref{lem:is-in-y-given-x-and-z}, we can prove Theorem \ref{thm:is-in-gen}.
First, for the instance, $x$, the probability that label $q(\in \mathfrak{B}_I(\mathcal{Y}))$ can be derived as follows:
\begin{align*}
    \Pr\bigl\{Y=q|X=x\bigl\} &= \sum_{z \in \mathcal{Y}}\Pr\bigl\{Y=q|X=x, Z=z\}\Pr\bigl\{Z=z|X=x\bigl\} \\ 
    &= \frac{1}{\comb{K}{I}}\sum_{z \in q}\Pr\{Z=z|X=x\}. \nonumber
\end{align*}

In addition, the probability that label $\mathcal{Y}\backslash q(\in \mathfrak{B}_{K-I}(\mathcal{Y}))$ is assigned to the instance $x$ can be derived as follows:
\begin{align*}
    \Pr\bigl\{Y=\mathcal{Y}\backslash q|X=x\bigl\} &= \sum_{z \in \mathcal{Y}}\Pr\bigl\{Y=\mathcal{Y}\backslash q|X=x, Z=z\}\Pr\bigl\{Z=z|X=x\bigl\} \\ 
    &= \frac{1}{\comb{K}{I}}\sum_{z \in \mathcal{Y}\backslash q}\Pr\{Z=z|X=x\}.
\end{align*}

\subsection*{Proof of Corollary \ref{cor:is-in-recv}}
First, the following holds:
\begin{align}
    &\Pr\{\widehat{Y}=\alpha|X=x\}\nonumber\\
    &= \Pr\{\widehat{Y} =\alpha|X=x, \alpha \in Y \in \mathfrak{B}_I(\mathcal{Y})\}\Pr\{\alpha \in Y \in \mathfrak{B}_I(\mathcal{Y})|X=x\} \nonumber \\
    &\hspace{2em}+ \Pr\{\widehat{Y} =\alpha|X=x, \alpha \notin Y \in \mathfrak{B}_I(\mathcal{Y})\}\Pr\{\alpha \notin Y \in \mathfrak{B}_I(\mathcal{Y})|X=x\} \nonumber \\
    &\hspace{2em}+ \Pr\{\widehat{Y} =\alpha|X=x, \alpha \in Y \in \mathfrak{B}_{K-I}(\mathcal{Y})\}\Pr\{\alpha \in Y \in \mathfrak{B}_{K-I}(\mathcal{Y})|X=x\} \nonumber \\
    &\hspace{2em}+ \Pr\{\widehat{Y} =\alpha|X=x, \alpha \notin Y \in \mathfrak{B}_{K-I}(\mathcal{Y})\}\Pr\{\alpha \notin Y \in \mathfrak{B}_{K-I}(\mathcal{Y})|X=x\} \nonumber \\
    &= \Pr\{\widehat{Y} =\alpha|X=x, \alpha \in Y \in \mathfrak{B}_I(\mathcal{Y})\}\Pr\{\alpha \in Y \in \mathfrak{B}_I(\mathcal{Y})|X=x\} \nonumber \\
    &\hspace{2em}+ \Pr\{\widehat{Y} =\alpha|X=x, \alpha \in Y \in \mathfrak{B}_{K-I}(\mathcal{Y})\}\Pr\{\alpha \in Y \in \mathfrak{B}_{K-I}(\mathcal{Y})|X=x\} \nonumber \\
    &= \frac{1}{I}\Pr\{\alpha \in Y \in \mathfrak{B}_I(\mathcal{Y})|X=x\} + \frac{1}{K-I}\Pr\{\alpha \in Y \in \mathfrak{B}_{K-I}(\mathcal{Y})|X=x\}. \label{eq:is-in-recv-1}
\end{align}
In the third equality, we used $\Pr\{\widehat{Y} =\alpha|X=x, \alpha \notin Y \in \mathfrak{B}_I(\mathcal{Y})\} = 0$ $\Pr\{\widehat{Y} =\alpha|X=x, \alpha \notin Y \in \mathfrak{B}_{K-I}(\mathcal{Y})\}=0$.
In the last equality, we used $\Pr\{\widehat{Y} =\alpha|X=x, \alpha \in Y \in \mathfrak{B}_I(\mathcal{Y})\}=1/I$, $\Pr\{\widehat{Y} =\alpha|X=x, \alpha \in Y \in \mathfrak{B}_{K-I}(\mathcal{Y})\}=1/(K-I)$.

For the first term on the right-hand side of Eq.\ \eqref{eq:is-in-recv-1}, the following
holds.
\begin{align}
    \Pr\bigl\{\alpha \in Y \in \mathfrak{B}_I(\mathcal{Y})|X=x\bigl\}
    &\equiv \sum_{\tilde{y} \in \mathfrak{B}_I(\mathcal{Y})}\mathbbm{1}(\alpha \in \tilde{y})\widetilde{P}_I(\tilde{y}|x) \nonumber \\
    &= \frac{1}{{}_K C_I}\sum_{\tilde{y} \in \mathfrak{B}_I(\mathcal{Y})}\mathbbm{1}(\alpha \in \tilde{y}^{(I)})\sum_{y \in \tilde{y}}P(y|x) \nonumber \\
    &= \frac{1}{{}_K C_I}\{{}_{K-2} C_{I-1} P(\alpha|x) + {}_{K-2} C_{I-2}\} \nonumber \\
    &= \frac{I(K-I)}{K(K-1)}P(\alpha|x) + \frac{I(I-1)}{K(K-1)}.
    \label{eq:is-in-recv-2}
\end{align}
In the second equality, we used Theorem \ref{thm:which-one-gen}.
For the third equality, we used Eq.\ \eqref{eq:which-one-recv-4}.
The same relationship as in Eq.\ \eqref{eq:which-one-recv-3} holds for the second term on the right-hand side of Eq.\ \eqref{eq:is-in-recv-1}.
Substituting Eqs.\ \eqref{eq:is-in-recv-2} and \eqref{eq:which-one-recv-3} into Eq.\ \eqref{eq:is-in-recv-1}, Corollary \ref{cor:is-in-recv} holds as follows.
\begin{align*}
    &\Pr\{\widehat{Y}=\alpha|X=x\}\nonumber\\
    &= \frac{K-I}{K(K-1)}P(\alpha|x) + \frac{(I-1)}{K(K-1)} + \frac{I}{K(K-1)}P(\alpha|x) + \frac{(K-I-1)}{K(K-1)}\\
    &=\frac{1}{K-1}\Pr(\alpha|x) + \frac{K-2}{K-1} \cdot \frac{1}{K}.
\end{align*}

\subsection*{Proof of Theorem \ref{thm:is-in-loss}}
\begin{lemma}\label{lem:is-in-pdf}
The probability that an ordinary label $y$ is assigned to an instance $x$, $P(y|x)$, can be expressed using the probability of an is-in-type $Q\&A$ Label $\tilde{y}$ being assigned for the given instance $x$ as follows:
\begin{align}
    P(y|x)
    &= \frac{K(K-1)}{2I(K-I)}
    \sum_{y \in \tilde{y}\in \mathfrak{B}_{I}(\mathcal{Y}) \cup \mathfrak{B}_{K-I}(\mathcal{Y})}\widetilde{P}_I(\tilde{y}|x) - \frac{2I^2+K^2-K(2I+1)}{2I(K-I)}\nonumber \\
    &= \frac{K(K-1)}{2I(K-I)}
    \sum_{y \in \tilde{y}\in \mathfrak{B}_{I}(\mathcal{Y}) \cup \mathfrak{B}_{K-I}(\mathcal{Y})}\widetilde{P}_I(\tilde{y}|x) + 1 - \frac{K(K-1)}{2I(K-I)}.
    \label{eq:is-in-pdf}
\end{align}
\end{lemma}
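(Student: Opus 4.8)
The plan is to mirror the proof of Lemma~\ref{lem:which-one-pdf}. I would start from the marginal confidence quantity $\Pr\{y \in \widetilde{Y} \in \mathfrak{B}_{I}(\mathcal{Y}) \cup \mathfrak{B}_{K-I}(\mathcal{Y})|X=x\}$, which equals $\sum_{y \in \tilde{y}\in \mathfrak{B}_{I}(\mathcal{Y}) \cup \mathfrak{B}_{K-I}(\mathcal{Y})}\widetilde{P}_I(\tilde{y}|x)$, and split it into the contribution from labels of size $I$ and labels of size $K-I$. For the is-in-type, Theorem~\ref{thm:is-in-gen} gives $\widetilde{P}_I(\tilde{y}|x) = \frac{1}{\comb{K}{I}}\sum_{y \in \tilde{y}}P(y|x)$ for both size classes, so each contribution reduces to an instance of the combinatorial identity~\eqref{eq:which-one-recv-4}.

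For the size-$I$ contribution, I would apply~\eqref{eq:which-one-recv-4} with $J=I$ to obtain $\frac{1}{\comb{K}{I}}\{\comb{K-2}{I-1}P(y|x) + \comb{K-2}{I-2}\} = \frac{I(K-I)}{K(K-1)}P(y|x) + \frac{I(I-1)}{K(K-1)}$, which is exactly Eq.~\eqref{eq:is-in-recv-2}. The key labor-saving observation is that the size-$(K-I)$ contribution is identical to the which-one case, since both Theorem~\ref{thm:which-one-gen} and Theorem~\ref{thm:is-in-gen} assign $\widetilde{P}_I(\tilde{y}|x) = \frac{1}{\comb{K}{I}}\sum_{y \in \tilde{y}}P(y|x)$ on $\mathfrak{B}_{K-I}(\mathcal{Y})$; hence this term equals $\frac{I(K-I)}{K(K-1)}P(y|x) + \frac{(K-I)(K-I-1)}{K(K-1)}$, exactly as already derived in the proof of Lemma~\ref{lem:which-one-pdf}.

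Adding the two contributions, the coefficient of $P(y|x)$ becomes $\frac{2I(K-I)}{K(K-1)}$, and the constant collapses to $\frac{I(I-1)+(K-I)(K-I-1)}{K(K-1)} = \frac{2I^2+K^2-K(2I+1)}{K(K-1)}$. Solving this linear relation for $P(y|x)$ then yields the first line of~\eqref{eq:is-in-pdf}. To obtain the second line I would verify the algebraic identity $-\frac{2I^2+K^2-K(2I+1)}{2I(K-I)} = 1 - \frac{K(K-1)}{2I(K-I)}$, equivalently $2I(K-I)-K(K-1) = -\big(2I^2+K^2-K(2I+1)\big)$, which is immediate after expanding both sides to $-2I^2+2IK-K^2+K$.

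The computation is essentially routine once this structure is in place; the only mild obstacle is the bookkeeping of the two constant terms and confirming that they assemble into the stated polynomial $2I^2+K^2-K(2I+1)$. Because the size-$(K-I)$ piece can be reused verbatim from the which-one proof, the genuinely new content is confined to the size-$I$ computation and the final algebraic simplification.
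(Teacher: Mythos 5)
Your proposal is correct and follows essentially the same route as the paper's proof: the same split of $\sum_{y \in \tilde{y}\in \mathfrak{B}_{I}(\mathcal{Y}) \cup \mathfrak{B}_{K-I}(\mathcal{Y})}\widetilde{P}_I(\tilde{y}|x)$ into the size-$I$ and size-$(K-I)$ contributions, the same application of the combinatorial identity \eqref{eq:which-one-recv-4} to each, and the same rearrangement of the resulting linear relation. Your observation that the size-$(K-I)$ term can be reused verbatim from the proof of Lemma \ref{lem:which-one-pdf} is a harmless shortcut (the paper simply re-derives that identical computation), and your explicit verification of the identity behind the second line of \eqref{eq:is-in-pdf} is sound.
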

\begin{proof}
First, the following holds:
\begin{align}
    &\Pr\{y \in \widetilde{Y} \in \mathfrak{B}_{I}(\mathcal{Y}) \cup \mathfrak{B}_{K-I}(\mathcal{Y})|X=x\}\nonumber\\
    &= \sum_{y \in \tilde{y} \in \mathfrak{B}_{I}(\mathcal{Y}) \cup \mathfrak{B}_{K-I}(\mathcal{Y})} \widetilde{P}_I(\tilde{y}|x)
    = \sum_{y \in \tilde{y} \in \mathfrak{B}_{I}(\mathcal{Y})}\tilde{P}_I(\tilde{y}|x) + \sum_{y \in \tilde{y} \in \mathfrak{B}_{I}(\mathcal{Y})} \tilde{P}_{K-I}(\tilde{y}|x).
    \label{eq:is-in-pdf-proof}
\end{align}

According to Theorem \ref{thm:is-in-gen}, for the first term on the right-hand side of the Eq.\ \eqref{eq:is-in-pdf-proof}, the following holds:
\begin{align*}
    \sum_{y\in\tilde{y}\in \mathfrak{B}_I(\mathcal{Y})}\widetilde{P}_I(\widetilde{Y}|x)
    &= \frac{1}{\comb{K}{I}}\sum_{y \in y' \in \mathfrak{B}_I(\mathcal{Y})}\sum_{y' \in \tilde{y}}P(y'|x) \nonumber \\
    &= \frac{1}{\comb{K}{I}}\sum_{\tilde{y}\in \mathfrak{B}_I(\mathcal{Y})}\mathbbm{1}(y\in\tilde{y})\sum_{y'\in\tilde{y}}P(y'|x)\\
    &= \frac{1}{\comb{K}{I}}\{\comb{K-2}{I-1}P(y|x) + \comb{K-2}{I-2}\} \nonumber \\
    &= \frac{I(K-I)}{K(K-1)}P(y|x) + \frac{I(I-1)}{K(K-1)}.
\end{align*}
For the third equality, we used Eq.\ \eqref{eq:which-one-recv-4}.

Similarly, according to Theorem \ref{thm:is-in-gen}, for the second term on the right-hand side of Eq.\ \eqref{eq:is-in-pdf-proof}, the following holds:
\begin{align*}
    \sum_{y \in \tilde{y} \in \mathfrak{B}_{K-I}(\mathcal{Y})}\widetilde{P}_I(\tilde{y}|x)
    &= \frac{1}{\comb{K}{I}}\sum_{\tilde{y}\in \mathfrak{B}_{K-I}(\mathcal{Y})^y}\sum_{y' \in \tilde{y}}P(y'|x) \nonumber \\
    &= \frac{1}{\comb{K}{I}}\sum_{\tilde{y}\in \mathfrak{B}_{K-I}(\mathcal{Y})}\mathbbm{1}(y\in\tilde{y})\sum_{y'\in\tilde{y}}P(y'|x)\\
    &= \frac{1}{\comb{K}{I}}\{\comb{K-2}{K-I-1}P(y|x) + \comb{K-2}{K-I-2}\} \nonumber \\
    &= \frac{I(K-I)}{K(K-1)}P(y|x) + \frac{(K-I)(K-I-1)}{K(K-1)}.
\end{align*}
For the third equality, we used Eq.\ \eqref{eq:which-one-recv-4}.

From the above, the following holds:
\begin{align*}
    &\sum_{y \in \tilde{y} \in \mathfrak{B}_{I}(\mathcal{Y}) \cup \mathfrak{B}_{K-I}(\mathcal{Y})} \widetilde{P}_I(\tilde{y}|x)\\
    &= \frac{I(K-I)}{K(K-1)}P(y|x) + \frac{I(I-1)}{K(K-1)} + \frac{I(K-I)}{K(K-1)}P(y|x) + \frac{(K-I)(K-I-1)}{K(K-1)}\\
    &= \frac{2I(K-I)}{K(K-1)}P(y|x) + \frac{2I^2+K^2-K(2I+1)}{K(K-1)}.
\end{align*}
By rearranging this result, Eq.\ \eqref{eq:is-in-pdf} is derived.
\end{proof}

For the classification risk defined in Eq.\ \eqref{eq:pred_err}, the following holds:
\begin{align}
    R(f) = \mathbb{E}_{P(X)}\Bigl[\mathbb{E}_{P(Y|X)}\Bigl[\mathcal{L}(f(x), y)\Bigl]\Bigl]. \label{eq:is-in_expected}
\end{align}
Using Lemma \ref{lem:is-in-pdf}, the following holds:
\begin{align*}
    &\mathbb{E}_{P(Y|x)}\Bigl[\mathcal{L}(f(x), y)\Bigl]\\
    &= \sum_{y \in \mathcal{Y}}\mathcal{L}(f(x), y)P(y|x)\\
    &= \sum_{y \in \mathcal{Y}}\mathcal{L}(f(x), y)\Biggl\{\frac{K(K-1)}{2I(K-I)}
    \sum_{y \in \tilde{y}\in \mathfrak{B}_{I}(\mathcal{Y}) \cup \mathfrak{B}_{K-I}(\mathcal{Y})}\widetilde{P}_I(\tilde{y}|x) - \frac{2I^2+K^2-K(2I+1)}{2I(K-I)}\Biggr\} \nonumber \\
    &= \frac{K(K-1)}{2I(K-I)}\sum_{y \in \mathcal{Y}}\sum_{y \in \tilde{y}\in \mathfrak{B}_{I}(\mathcal{Y}) \cup \mathfrak{B}_{K-I}(\mathcal{Y})}\mathcal{L}(f(x), y)\widetilde{P}_I(\tilde{y}|x) - \frac{2I^2+K^2-K(2I+1)}{2I(K-I)}\sum_{y \in \mathcal{Y}}\mathcal{L}(f(x), y)\\
    &= \frac{K(K-1)}{2I(K-I)}\sum_{\tilde{y}\in \mathfrak{B}_{I}(\mathcal{Y}) \cup \mathfrak{B}_{K-I}(\mathcal{Y})}\sum_{y \in \tilde{y}}\mathcal{L}(f(x), y)\widetilde{P}_I(\tilde{y}|x) - \frac{2I^2+K^2-K(2I+1)}{2I(K-I)}\sum_{y \in \mathcal{Y}}\mathcal{L}(f(x), y)\\
    &= \frac{K(K-1)}{2I(K-I)}\sum_{\tilde{y}\in \mathfrak{B}_{I}(\mathcal{Y}) \cup \mathfrak{B}_{K-I}(\mathcal{Y})}\widetilde{P}_I(\tilde{y}|x)\sum_{y \in \tilde{y}}\mathcal{L}(f(x), y') - \frac{2I^2+K^2-K(2I+1)}{2I(K-I)}\sum_{y \in \mathcal{Y}}\mathcal{L}(f(x), y)\\
    &= \mathbb{E}_{\widetilde{P}_I(\widetilde{Y}|x)}\Biggl[\frac{K(K-1)}{2I(K-I)}\sum_{y \in \widetilde{Y}}\mathcal{L}(f(x), \widetilde{Y}) - \frac{2I^2+K^2-K(2I+1)}{2I(K-I)}\sum_{y \in \mathcal{Y}}\mathcal{L}(f(x), y)\Biggr]\\
    &= \mathbb{E}_{\widetilde{P}_I(Y|x)}\Biggl[\frac{K(K-1)}{2I(K-I)}\sum_{y \in \widetilde{Y}}\mathcal{L}(f(x), \widetilde{Y})\\
    &\hspace{5em}- \frac{2I^2+K^2-K(2I+1)}{2I(K-I)}\sum_{y \in \widetilde{Y}}\mathcal{L}(f(x), y) - \frac{2I^2+K^2-K(2I+1)}{2I(K-I)}\sum_{y \notin \widetilde{Y}}\mathcal{L}(f(x), y)\Biggr]\\
    &= \mathbb{E}_{\widetilde{P}_I(\widetilde{Y}|x)}\left[\sum_{y \in \widetilde{Y}}\mathcal{L}(f(x), y) - \frac{2I^2+K^2-K(2I+1)}{2I(K-I)}\sum_{y \notin \widetilde{Y}}\mathcal{L}(f(x), y)\right].
\end{align*}
Therefore,
\begin{align*}
    \widetilde{\mathcal{L}}_{I}(f(X), \widetilde{Y}) &=
       \sum_{y \in \widetilde{Y}}\mathcal{L}(f(x), y) - \frac{2I^2+K^2-K(2I+1)}{2I(K-I)}\sum_{y \notin \widetilde{Y}}\mathcal{L}(f(x), y).
\end{align*}
Subsequently, by substituting this into Eq.\ \eqref{eq:is-in_expected}, the following holds:
\begin{align*}
    R(f) = \mathbb{E}_{\widetilde{P}_I(X, \widetilde{Y})}\Bigl[\widetilde{\mathcal{L}}_{I}(f(X), \widetilde{Y})\Bigl].
\end{align*}

\subsection*{Proof of Lemma \ref{lem:is_in_rade_loss}}
If $\alpha_{i} = 2\mathbbm{1}(y \in \tilde{y}_{i})-1$, for the Rademacher complexity $\mathfrak{R}_n(\widetilde{\mathcal{H}})$ of function set $\widetilde{\mathcal{H}}$ the following holds:
\begin{align}
    &\mathfrak{R}_n(\widetilde{\mathcal{H}})\nonumber\\
    &= \mathbb{E}_{\widetilde{P}_I(X, \widetilde{Y})}\mathbb{E}_{\sigma}\Biggl[\sup_{h\in\widetilde{\mathcal{H}}}\frac{1}{n}\sum_{i=1}^n\sigma_i h(x_i, \tilde{y}_i)\Biggr] \nonumber \\
    &= \mathbb{E}_{\widetilde{P}_I(X, \widetilde{Y})}\mathbb{E}_{\sigma}\Biggl[\sup_{f\in\mathcal{F}}\frac{1}{n}\sum_{i=1}^n \sigma_i \Biggl\{ \sum_{y\in \tilde{y}_i}\mathcal{L}(f(x_i), y) - \frac{2I^2+K^2-K(2I+1)}{2I(K-I)}\sum_{y \notin \tilde{y}_i}\mathcal{L}(f(x_i), y)\Biggr\}\Biggr] \nonumber \\
    &\le \mathbb{E}_{\widetilde{P}_I(X, \widetilde{Y})}\mathbb{E}_{\sigma}\Biggl[\sup_{f\in\mathcal{F}}\frac{1}{n}\sum_{i=1}^n \sigma_i \sum_{y\in \tilde{y}_i}\mathcal{L}(f(x_i), y)\Biggr] \nonumber \\ 
    &\hspace{1em}+\mathbb{E}_{\widetilde{P}_I(X, \widetilde{Y})}\mathbb{E}_{\sigma}\Biggl[\sup_{f\in \mathcal{F}} \frac{1}{n}\sum_{i=1}^n\sigma_i\frac{2I^2+K^2-K(2I+1)}{2I(K-I)}\sum_{y \notin \tilde{y}_i}\mathcal{L}(f(x_i), y)\Biggr] \nonumber \\
    &= \mathbb{E}_{\widetilde{P}_I(X, \widetilde{Y})}\mathbb{E}_{\sigma}\Biggl[\sup_{f\in\mathcal{F}}\frac{1}{2n}\sum_{i=1}^n \sigma_i \sum_{y\in \mathcal{Y}}\mathcal{L}(f(x_i), y)(\alpha_i + 1)\Biggr] \nonumber \\ 
    &\hspace{1em}+\mathbb{E}_{\widetilde{P}_I(X, \widetilde{Y})}\mathbb{E}_{\sigma}\Biggl[\sup_{f\in \mathcal{F}} \frac{1}{2n}\sum_{i=1}^n\sigma_i\frac{2I^2+K^2-K(2I+1)}{2I(K-I)}\sum_{y \in \mathcal{Y}}\mathcal{L}(f(x_i), y)(1-\alpha_i)\Biggr] \nonumber \\
    &= \mathbb{E}_{\widetilde{P}_I(X, \widetilde{Y})}\mathbb{E}_{\sigma}\Biggl[\sup_{f\in\mathcal{F}}\frac{1}{2n}\sum_{i=1}^n \Biggl(\sigma_i \sum_{y\in \mathcal{Y}}\alpha_i\mathcal{L}(f(x_i), y) + \sigma_i \sum_{y\in \mathcal{Y}}\mathcal{L}(f(x_i), y)
    \Biggr)\Biggr] \nonumber \\ 
    &\hspace{1em}+\frac{2I^2+K^2-K(2I+1)}{2I(K-I)}\mathbb{E}_{\widetilde{P}_I(X, \widetilde{Y})}\mathbb{E}_{\sigma}\Biggl[\sup_{f\in \mathcal{F}} \frac{1}{2n}\sum_{i=1}^n\Biggl(\sigma_i\sum_{y \in \mathcal{Y}}\mathcal{L}(f(x_i), y) - \sigma_i\sum_{y \in \mathcal{Y}}\alpha_i\mathcal{L}(f(x_i), y)\Biggr)\Biggr] \nonumber \\
    &\le \mathbb{E}_{\widetilde{P}_I(X, \widetilde{Y})}\mathbb{E}_{\sigma}\Biggl[\sup_{f\in\mathcal{F}}\frac{1}{n}\sum_{i=1}^n \sigma_i \sum_{y\in \mathcal{Y}}\mathcal{L}(f(x_i), y)
    \Biggr] \nonumber \\
    &\hspace{1em}+\frac{2I^2+K^2-K(2I+1)}{2I(K-I)}\mathbb{E}_{\widetilde{P}_I(X, \widetilde{Y})}\mathbb{E}_{\sigma}\Biggl[\sup_{f\in \mathcal{F}} \frac{1}{n}\sum_{i=1}^n\sigma_i\sum_{y \in \mathcal{Y}}\mathcal{L}(f(x_i), y)\Biggr] \nonumber \\
    &= \frac{K(K-1)}{2I(K-I)}\mathbb{E}_{\widetilde{P}_I(X, \widetilde{Y})}\mathbb{E}_{\sigma}\Biggl[\sup_{f\in\mathcal{F}}\frac{1}{n}\sum_{i=1}^n \sigma_i\sum_{y\in \mathcal{Y}}\mathcal{L}(f(x_i), y)
    \Biggr] \nonumber \\
    &\le \frac{K(K-1)}{2I(K-I)}\sum_{y\in \mathcal{Y}}\mathbb{E}_{\widetilde{P}_I(X, \widetilde{Y})}\mathbb{E}_{\sigma}\biggl[\sup_{f\in\mathcal{F}}\frac{1}{n}\sum_{i=1}^n \sigma_i\mathcal{L}(f(x_i), y)
    \biggl] \nonumber \\
    &= \frac{K(K-1)}{2I(K-I)}\sum_{y\in \mathcal{Y}} \mathfrak{R}_n(\mathcal{L}\circ \mathcal{F}(\cdot,y)) \nonumber \\
    &\le \frac{\sqrt{2}\rho K^2(K-1)}{2I(K-I)}\sum_{y\in \mathcal{Y}} \mathfrak{R}_n(\mathcal{G}_y), \label{eq:rade_H-i-i}
\end{align}
where we applied the Rademacher vector contraction inequality \citep{10.1007/978-3-319-46379-7_1} to the last inequality. 

\subsection*{Proof of Theorem \ref{thm:is_in_error}}
First, the following holds:
\begin{align}
    &\sup_{f \in \mathcal{F}}\Biggl\{\widetilde{\mathcal{L}}^I(f(x), \tilde{y}) - \widetilde{\mathcal{L}}^I(f(x'), \tilde{y}')\Biggr\} \nonumber \\
    &= \sup_{f \in \mathcal{F}}\Biggl[\Biggl\{\sum_{y\in\tilde{y}}\mathcal{L}(f(x), y) - \frac{2I^2+K^2-K(2I+1)}{2I(K-I)}\sum_{y\notin \tilde{y}}\mathcal{L}(f(x), y) \Biggl\} \nonumber \\
    &\hspace{5em}- \Biggl\{\sum_{y\in\tilde{y}'}\mathcal{L}(f(x'), y) - \frac{2I^2+K^2-K(2I+1)}{2I(K-I)}\sum_{y\notin \tilde{y}'}\mathcal{L}(f(x'), y) \Biggl\}\Biggl] \nonumber \\
    &= \sup_{f \in \mathcal{F}}\Biggl[\Biggl\{\sum_{y\in\tilde{y}}\mathcal{L}(f(x), y) - \sum_{y\in\tilde{y}'}\mathcal{L}(f(x'), y) \Biggl\} \nonumber \\
    &\hspace{5em}- \frac{2I^2+K^2-K(2I+1)}{2I(K-I)}\Biggl\{\sum_{y\notin \tilde{y}}\mathcal{L}(f(x), y)  - \sum_{y\notin \tilde{y}'}\mathcal{L}(f(x'), y) \Biggl\}\Biggl] \nonumber \\
    &\le \left\{|\tilde{y}|+\frac{(K-|\tilde{y}'|)(2I^2+K^2-K(2I+1))}{2I(K-I)}\right\}C_{L} \nonumber \\
    &\le \begin{cases}
        \displaystyle\left(I +\frac{2I^2+K^2-K(2I+1)}{2(K-I)}\right)C_{L} & \text{if $I \ge \displaystyle\frac{K}{2}$} \\
        \displaystyle\left((K-I)+\frac{2I^2+K^2-K(2I+1)}{2I}\right)C_{L} & \text{if $I < \displaystyle\frac{K}{2}$}
       \end{cases} \nonumber \\
    &= \begin{cases}
        \displaystyle\frac{K(K-1)}{2(K-I)}C_{L} & \text{if $I \ge \displaystyle\frac{K}{2}$} \\
        \displaystyle\frac{K(K-1)}{2I}C_{L} & \text{if $I < \displaystyle\frac{K}{2}$}
       \end{cases}. \label{eq:sup_L-L_i_i}
\end{align}
Next, let us define the function $A((x_1, \tilde{y}_1), (x_2, \tilde{y}_2),..., (x_n, \tilde{y}_n))$ as follows:
\begin{align*}
    &A((x_1, \tilde{y}_1), (x_2, \tilde{y}_2),..., (x_n, \tilde{y}_n))\\ 
    &= \sup_{f \in \mathcal{F}}\{\hat{R}(f) - R(f)\} \\
    &= \sup_{f \in \mathcal{F}}\biggl\{\frac{1}{n}\sum_{i=1}^n \widetilde{\mathcal{L}}_{I}(f(x_i), \tilde{y}_i) - \mathbb{E}_{\widetilde{P}_I(X, \widetilde{Y})}\biggl[\widetilde{\mathcal{L}}_{I}(f(x), \tilde{y})\biggl]\biggl\}.
\end{align*}
Then, the following inequality holds:
\begin{align*}
    &A((x_1, \tilde{y}_1), (x_2, \tilde{y}_2),..., (x_n, \tilde{y}_n)) - A((x_1, \tilde{y}_1), (x_2, \tilde{y}_2),..., (x_{n-1}, \tilde{y}_{n-1}), (x', \tilde{y}')) \nonumber \\
    &= \sup_{f \in \mathcal{F}}\inf_{f' \in \mathcal{F}}
    \biggl\{\frac{1}{n}\sum_{i=1}^n\widetilde{\mathcal{L}}_{I}(f(x_i), \tilde{y}_i) - \mathbb{E}_{P_I(X, \widetilde{Y})}\biggl[\widetilde{\mathcal{L}}_{I}(f(X), \widetilde{Y})\biggl] \nonumber \\
    &\hspace{3em}- \biggl(\frac{1}{n}\sum_{i=1}^{n-1}\widetilde{\mathcal{L}}_{I}(f'(x_i), \tilde{y}_i) + \frac{1}{n}\widetilde{\mathcal{L}}_{I}(f'(x'), \tilde{y}') - \mathbb{E}_{P_I(X, \widetilde{Y})}\biggl[\widetilde{\mathcal{L}}_{I}(f'(X), \widetilde{Y})\biggl]\biggr)\biggl\} \nonumber \\
    &\le \sup_{f \in \mathcal{F}}
    \biggl\{\frac{1}{n}\sum_{i=1}^n\widetilde{\mathcal{L}}_{I}(f(x_i), \tilde{y}_i) - \mathbb{E}_{P_I(X, \widetilde{Y})}\biggl[\widetilde{\mathcal{L}}_{I}(f(X), \widetilde{Y})\biggl] \nonumber \\
    &\hspace{3em}- \biggl(\frac{1}{n}\sum_{i=1}^{n-1}\widetilde{\mathcal{L}}_{I}(f(x_i), \tilde{y}_i) + \frac{1}{n}\widetilde{\mathcal{L}}_{I}(f(x'), \tilde{y}') - \mathbb{E}_{P_I(X, \widetilde{Y})}\biggl[\widetilde{\mathcal{L}}_{I}(f(X), \widetilde{Y})\biggl]\biggr)\biggl\} \nonumber \\
    &= \frac{1}{n}\sup_{f \in \mathcal{F}}\biggl\{\widetilde{\mathcal{L}}_{I}(f(x_n), \tilde{y}_n) - \widetilde{\mathcal{L}}_{I}(f(x'), \tilde{y}')\biggl\} \nonumber \\
    &= \begin{cases}
        \displaystyle\frac{K(K-1)}{2n(K-I)}C_{L} & \text{if $I \ge \displaystyle\frac{K}{2}$} \\
        \displaystyle\frac{K(K-1)}{2nI}C_{L} & \text{if $I < \displaystyle\frac{K}{2}$}
       \end{cases}.
\end{align*}
For the last equality, we used equation\eqref{eq:sup_L-L_i_i}.
Similarly, the following holds:
\begin{align*}
    &A((x_1, \tilde{y}_1), (x_2, \tilde{y}_2),..., (x_{n-1}, \tilde{y}_{n-1}), (x', \tilde{y}')) - A((x_1, \tilde{y}_1), (x_2, \tilde{y}_2),..., (x_n, \tilde{y}_n))\\
    &\le \begin{cases}
        \displaystyle\frac{K(K-1)}{2n(K-I)}C_{L} & \text{if $I \ge \displaystyle\frac{K}{2}$} \\
        \displaystyle\frac{K(K-1)}{2nI}C_{L} & \text{if $I < \displaystyle\frac{K}{2}$}
       \end{cases}.
\end{align*}
Summarizing these two inequalities, the following holds:
\begin{align*}
    &|A((x_1, \tilde{y}_1), (x_2, \tilde{y}_2),..., (x_n, \tilde{y}_n)) - A((x_1, \tilde{y}_1), (x_2, \tilde{y}_2),..., (x_{n-1}, \tilde{y}_{n-1}), (x', \tilde{y}'))|\\
    &\le \begin{cases}
        \displaystyle\frac{K(K-1)}{2n(K-I)}C_{L} & \text{if $I \ge \displaystyle\frac{K}{2}$} \\
        \displaystyle\frac{K(K-1)}{2nI}C_{L} & \text{if $I < \displaystyle\frac{K}{2}$}
       \end{cases}.
\end{align*}
Therefore, from McDiarmid's inequality, the following holds for the probability larger than $1-\delta$.
\begin{align}
    &\sup_{f \in \mathcal{F}}\mid\hat{R}(f)-R(f)\mid - \mathbb{E}\biggl[\sup_{f \in \mathcal{F}}(\hat{R}(f)-R(f))\biggl]\nonumber\\
    &\hspace{2em}\le 
        \begin{cases}
        \displaystyle\frac{K(K-1)}{4(K-I)}C_{L}\sqrt{\frac{\log{(1/\delta})}{2n}} & \text{if $I \ge \displaystyle\frac{K}{2}$} \nonumber \\
        \displaystyle\frac{K(K-1)}{4I}C_{L}\sqrt{\frac{\log{(1/\delta})}{2n}} & \text{if $I < \displaystyle\frac{K}{2}$}
       \end{cases}.
\end{align}
Let $\mathcal{S}:=\left\{(x_{i}, \tilde{y}_{i})\right\}_{i=1}^{n}$ and $\mathcal{S}':=\left\{(x'_{i}, \tilde{y}'_{i})\right\}_{i=1}^{n}$ be a dataset where each dataset is drawn from the generative model $\widetilde{P}_I(X, \widetilde{Y})$.
Then, because $R(f)=\mathbb{E}[\hat{R}(f)]$, the following holds:
\begin{align}
    &\mathbb{E}\left[\sup_{f \in \mathcal{F}}\left(\hat{R}(f)-R(f)\right)\right]\nonumber\\
    &=\mathbb{E}_{\mathcal{S}}\left[\sup_{f \in \mathcal{F}}\left(
    \frac{1}{n}\sum_{i=1}^{n}\widetilde{\mathcal{L}}_{I}(f(x_i), \tilde{y}_i)
    -\mathbb{E}_{\mathcal{S}'}\left[\frac{1}{n}\sum_{i=1}^{n}\widetilde{\mathcal{L}}_{I}(f(x'_i), \tilde{y}'_i)\right]\right)\right]\nonumber\\
    &\le\mathbb{E}_{\mathcal{S}}\mathbb{E}_{\mathcal{S}'}\left[\sup_{f \in \mathcal{F}}\left(
    \frac{1}{n}\sum_{i=1}^{n}\widetilde{\mathcal{L}}_{I}(f(x_i), \tilde{y}_i)
    -\frac{1}{n}\sum_{i=1}^{n}\widetilde{\mathcal{L}}_{I}(f(x'_i), \tilde{y}'_i)\right)\right]\nonumber\\
    &=\mathbb{E}_{\mathcal{S}}\mathbb{E}_{\mathcal{S}'}\left[\sup_{f \in \mathcal{F}}\left(
    \frac{1}{n}\sum_{i=1}^{n}\widetilde{\mathcal{L}}_{I}(f(x_i), \tilde{y}_i)
    -\frac{1}{n}\sum_{i=1}^{n}\widetilde{\mathcal{L}}_{I}(f(x'_i), \tilde{y}'_i)\right)\right]\nonumber\\
    &=\mathbb{E}_{\mathcal{S}}\mathbb{E}_{\mathcal{S}'}\left[\sup_{f \in \mathcal{F}}\frac{1}{n}\sum_{i=1}^{n}\left(
    \widetilde{\mathcal{L}}_{I}(f(x_i), \tilde{y}_i)
    -\widetilde{\mathcal{L}}_{I}(f(x'_i), \tilde{y}'_i)\right)\right]\nonumber\\
    &=\mathbb{E}_{\mathcal{S}}\mathbb{E}_{\mathcal{S}'}\mathbb{E}_{\sigma}\left[\sup_{f \in \mathcal{F}}\frac{1}{n}\sum_{i=1}^{n}\sigma_{i}\left(
    \widetilde{\mathcal{L}}_{I}(f(x_i), \tilde{y}_i)
    -\widetilde{\mathcal{L}}_{I}(f(x'_i), \tilde{y}'_i)\right)\right]\label{eq:EsupR-R_ii_1}\\
    &\le\mathbb{E}_{\mathcal{S}}\mathbb{E}_{\sigma}\left[\sup_{f \in \mathcal{F}}\frac{1}{n}\sum_{i=1}^{n}\sigma_{i}\widetilde{\mathcal{L}}_{I}(f(x_i), \tilde{y}_i)\right]
    +\mathbb{E}_{\mathcal{S}'}\mathbb{E}_{\sigma}\left[\sup_{f \in \mathcal{F}}\frac{1}{n}\sum_{i=1}^{n}(-\sigma_{i})\widetilde{\mathcal{L}}_{I}(f(x'_i), \tilde{y}'_i)\right]\nonumber\\
    &=2\mathfrak{R}_n(\widetilde{\mathcal{H}})\nonumber\\
    &\le \frac{\sqrt{2}\rho K^2(K-1)}{I(K-I)}\sum_{y\in \mathcal{Y}} \mathfrak{R}_n(\mathcal{G}_y).\label{eq:EsupR-R_ii_2}
\end{align}
In Eq.\ \eqref{eq:EsupR-R_ii_1}, we used the fact that $\widetilde{\mathcal{L}}_{I}(f(x_i), \tilde{y}_i)-\widetilde{\mathcal{L}}_{I}(f(x'_i), \tilde{y}'_i)$ and $\sigma_i(\widetilde{\mathcal{L}}_{I}(f(x_i), \tilde{y}_i)-\widetilde{\mathcal{L}}_{I}(f(x'_i), \tilde{y}'_i))$ follow the same distribution.
For Eq.\ \eqref{eq:EsupR-R_ii_2}, we used Eq.\ \eqref{eq:rade_H-i-i}.
Thus, the following holds for the classification error $\mathcal{E}_{I}$:
\begin{align*}
    \mathcal{E}_{I} &:= R(\hat{f})-R(f^*) \\
    &= (\hat{R}(\hat{f})-\hat{R}(f^*)) + (R(\hat{f})-\hat{R}(\hat{f})) + (\hat{R}(f^*)-R(f^*)) \\
    &\le 2\sup_{f \in \mathcal{F}}|\hat{R}(f)-R(f)| \\
    &\le \begin{cases}
        \displaystyle\frac{\sqrt{2}\rho K^2(K-1)}{I(K-I)}\sum_{y\in \mathcal{Y}} \mathfrak{R}_n(\mathcal{G}_y) + \frac{K(K-1)}{2(K-I)}C_{L}\sqrt{\frac{2\log{(2/\delta})}{n}} & \text{if $I \ge \displaystyle\frac{K}{2}$} \nonumber \\
        \displaystyle\frac{\sqrt{2}\rho K^2(K-1)}{I(K-I)}\sum_{y\in \mathcal{Y}} \mathfrak{R}_n(\mathcal{G}_y) + \frac{K(K-1)}{2I}C_{L}\sqrt{\frac{2\log{(2/\delta})}{n}} & \text{if $I < \displaystyle\frac{K}{2}$}
       \end{cases}.
\end{align*}

\bibliographystyle{apa}
\bibliography{mybibliography}




\end{document}